\newtheorem{theorem}{\textbf{Theorem}}
\newtheorem{proof}{\textbf{Proof}}
\newtheorem{remark}{Remark}
\begin{document}
	%\doublespacing
	\title{Dynamic RAN Slicing for Service-Oriented Vehicular Networks via Constrained Learning }
	\author{\small Wen Wu,~\IEEEmembership{\small Member,~IEEE,}
	Nan Chen,~\IEEEmembership{\small  Member,~IEEE,}
%	Chuqing~Hu,
	Conghao Zhou,~\IEEEmembership{\small Student~Member,~IEEE,}\\
	Mushu Li,~\IEEEmembership{\small Student~Member,~IEEE,}	        
	%	        Jon~W.~Mark,~\IEEEmembership{\small Life Fellow,~IEEE,}
	Xuemin~(Sherman)~Shen,~\IEEEmembership{\small Fellow,~IEEE},
	Weihua~Zhuang,~\IEEEmembership{\small Fellow,~IEEE}, and
	Xu~Li
	%	\thanks{Manuscript received September 27, 2014; revised January 28, 2015.}
	\thanks{W. Wu, N. Chen, C. Zhou, M. Li,  X. Shen, and W. Zhuang are with the Department of Electrical and Computer Engineering, University of Waterloo, Waterloo, ON,  N2L 3G1, Canada (e-mail:\{w77wu, n37chen, c89zhou, m475li, sshen,  wzhuang\}@uwaterloo.ca). \emph{Corresponding author: Mushu Li.}  }
	\thanks{X. Li is with Huawei Technologies Canada Inc., Ottawa, ON, K2K 3J1, Canada (email:xu.lica@huawei.com).}	
}

\maketitle
	\vspace{-0.5cm}
\thispagestyle{empty}
\begin{abstract}

	In this paper, we investigate a radio access network (RAN) slicing problem for Internet of vehicles (IoV) services with different quality of service (QoS) requirements, in which multiple logically-isolated slices are constructed on a common roadside network infrastructure. A dynamic RAN slicing framework is presented to dynamically allocate radio spectrum and computing resource, and distribute computation workloads for the slices. To obtain an optimal RAN slicing policy for accommodating the spatial-temporal dynamics of vehicle traffic density, we first formulate a constrained RAN slicing problem with the objective to minimize long-term system cost. This problem cannot be directly solved by traditional reinforcement learning (RL) algorithms due to complicated \emph{coupled constraints} among decisions. Therefore, we decouple the problem into a resource allocation subproblem and a workload distribution subproblem, and propose a \emph{two-layer constrained} RL algorithm, named \underline{R}esource \underline{A}llocation and \underline{W}orkload di\underline{S}tribution  (RAWS) to solve them. Specifically, an \emph{outer layer} first makes the resource allocation decision via an RL algorithm, and then an \emph{inner layer} makes the workload distribution decision via an optimization subroutine. Extensive trace-driven simulations show that the RAWS effectively reduces the system cost while satisfying QoS requirements with a high probability, as compared with benchmarks.

\vspace{0.2 cm}
\begin{IEEEkeywords}
RAN slicing, constrained reinforcement learning, vehicular networks, coupled constraint, workload distribution.%Contextual bandit learning, service placement, submodular optimization
\end{IEEEkeywords}

\end{abstract}

% In this paper, we consider the cooperative caching with overlapping regions such that one user may be simultenously served by multipe SBSs, how to cooperatively cache contents to enhance content hit ratio. This problem can be formulated as a submodular problem. Specifically, we incooporate the vedio QOE in the problem such that problem can be more fancy. In the second step, as the SBS do not know the content popularity, a learning based cooperative caching strategy is proposed. Extensive simulations on realdata set indicate our proposed scheme outperform otthers, such as greedy, random, conbinatorial bandit.

%------------------------------------------------------------------------------------------------------------------------------------------------------------------------------------------------------------------------------------------------------------------------------------------------------------------------------------------
\section{Introduction}
%The recent technical advance has shifted the autonomous driving from a science fiction to an exciting practice in the foreseeable future. 
Recent technology advances for vehicular communication networks have laid a solid foundation for diverse Internet of vehicles (IoV) services, e.g., autonomous driving~\cite{liang2017vehicular, lu2014connected}.  It is predicted that autonomous vehicles will represent 25\% of the automotive market, which is valued up to 77 billion US dollars by 2025~\cite{MicrosoftAVSurvey}. %When a vehicle is driving on the road, a large number of computation-intensive tasks of IoV services are required to be processed with a low latency. Those computation tasks can be offloaded to a computation-powerful roadside radio access network (RAN) for prompt processing.  
When a vehicle is on the road, a large number of computation-intensive tasks of IoV services are required to be processed. Processing such computation-intensive tasks by vehicles requires expensive on-board computing facilities and degrades fuel efficiency~\cite{lin2018architectural}. As a remedy to these limitations, a potential solution is to explore the multi-access edge computing (MEC) paradigm~\cite{zhang2019mobile, xu2018joint}, in which vehicles can offload these computation tasks to computation-powerful roadside radio access networks (RANs) for prompt processing.

IoV services are diversified with different quality of service (QoS) requirements. For example, a delay-sensitive cooperative sensing service for autonomous driving has a stringent delay requirement, e.g., 100\;\emph{ms}~\cite{zhang2019mobile}; while a resource-hungry high definition (HD) map creation service is delay-tolerant; and a mobile video streaming service for vehicle users requires high throughput~\cite{qiao2018improving}.  To support these diversified IoV services with different QoS requirements, the promising RAN slicing approach for vehicular networks has emerged, which aims at constructing multiple logically-isolated slices on the roadside network infrastructure~\cite{shen2020ai}. 
%The performance of RAN slicing is throttled by limited network resources, and hence efficient resource allocation is paramount, especially for supporting resource-hungry autonomous driving services.

In the literature, there exist some works investigating RAN slicing in the context of vehicular networks. Campolo~\emph{et~al.} propose a conceptual RAN slicing framework to support various vehicle-to-everything services~\cite{campolo20175g}. Another work presents a radio spectrum slicing scheme for roadside content caching services~\cite{zhang2020hierarchical}. While the existing works focus on resource allocation among slices, computation workload distribution is yet to be considered in the RAN slicing. {When base stations (BSs) are densely deployed along the road, vehicles are able to access the computing resources and offload their computation tasks to multiple BSs. Due to spatially uneven vehicle traffic density, some BSs may be overloaded. Vehicles in the coverage of the overloaded BSs can offload computation tasks to other underutilized BSs for better service performance, i.e., \emph{workload distribution}. Hence, a comprehensive RAN slicing policy should jointly consider resource allocation and  workload distribution.}

%
%Peng~\emph{et al.} sliced the spectrum for different vehicular services \cite{peng2019spectrum}. 

Developing an optimal RAN slicing policy  encounters many challenges. {Firstly}, the radio spectrum and computing resource allocation and workload distribution decisions are \emph{coupled}. The resource allocation decision determines the resource availability at BSs, thus affecting the workload distribution decision. In turn, the workload distribution decision yields a new task assignment between vehicles and BSs, consequently affecting the resource allocation decision. Efficient RAN slicing policy depends on the interaction between workload distribution and resource allocation. {Secondly}, IoV services are highly heterogeneous in terms of  QoS requirements, hence RAN slicing is required to satisfy multiple \emph{constraints}. {Thirdly}, since the system operates in the presence of  spatial-temporal dynamics of vehicle traffic density, maximizing long-term performance requires future information of vehicle traffic density. However, RAN slicing decisions have to be made without \emph{a priori} information. To address the above challenges, reinforcement learning (RL) is a potential approach to  make proactive decisions via learning traffic dynamics~\cite{Li2020deep}. 
	%Traditional RL algorithms can be applied to solve the constrained MDP with simple constraints, in which the space of actions can be shaped separately according to their constraints. 
	Directly applying traditional RL algorithms cannot solve the complicated RAN slicing problem with \emph{coupled constraints}, since RL algorithms make the resource allocation and workload distribution decisions separately and may violate the coupled constraints. Hence, a tailored RL algorithm is required for the RAN slicing problem with coupled constraints. %To enforce constraints, traditional RL algorithms project the output solution to the feasible set, such as clipping decisions according to constraints~\cite{shen2019lorm}. The simple projection trick results in performance degradation. 

In this paper, we present a dynamic RAN slicing framework for vehicular networks, in which workload distribution and resource allocation decisions are made for IoV services with different QoS requirements.  Considering spatial-temporal variations of vehicle traffic density,  RAN slicing is formulated as a stochastic optimization problem to minimize the long-term overall system cost while satisfying coupled constraints. To solve the problem, firstly, we decouple the problem into a resource allocation subproblem and a workload distribution subproblem. The resource allocation subproblem aims at minimizing the long-term system cost, which can be reformulated as a Markov decision process (MDP). The workload distribution subproblem is further decomposed into multiple one-shot optimization problems with the objective of minimizing instantaneous service delay, which are proved to be convex. Secondly, leveraging the properties of two subproblems, we propose a \emph{two-layer constrained} RL algorithm, named \underline{R}esource \underline{A}llocation and \underline{W}orkload di\underline{S}tribution (RAWS). First, the \emph{outer layer} of the RAWS makes the resource allocation decision via an RL algorithm, and then the \emph{inner layer} makes the workload distribution decision via a convex optimization subroutine, thereby satisfying the coupled constraints. 
%In addition, since the resource allocation decisions should satisfy \emph{resource capacity constraint} that the amount of the allocated resources should be less than the resource capacity of the BS, the actor network of the RL algorithm is designed based on the softmax function to generate feasible decisions. 
Extensive trace-driven simulation results demonstrate that the RAWS can effectively reduce the overall system cost, {as compared with the state-of-the-art RL benchmarks including the deep deterministic policy gradient (DDPG) algorithm~\cite{lillicrap2015continuous} and the twin delayed DDPG (TD3) algorithm~\cite{fujimoto2018addressing}.} Particularly, the RAWS can satisfy QoS constraints with a high probability, even in a heavy traffic scenario.
	
The main contributions of this paper are summarized as follows:
\begin{itemize}
	\item  We present a dynamic RAN slicing framework for vehicular networks to support multiple IoV services and balance BSs' workloads. We formulate the RAN slicing  as a constrained stochastic optimization problem. The objective is to dynamically make resource allocation and workload distribution decisions to minimize the long-term overall system cost while satisfying coupled constraints and resource capacity constraints;
%	\item We formulate the dynamic RAN slicing problem as a constrained stochastic optimization one, in which the objective is to dynamically make resource allocation and workload scheduling decisions to maximize the long-term overall system cost. We show that the problem cannot be solved by traditional RL algorithms due to coupled decisions;
	\item %We analyze the property of the problem, and show that the resource allocation subproblem is an MDP and the workload scheduling subproblem is a one-shot convex optimization problem. Based on this, 
 We decouple the constrained stochastic optimization problem into a resource allocation subproblem and a workload distribution subproblem, and propose a two-layer constrained RL algorithm to solve it. We also design a softmax-based actor network within the algorithm to generate the resource allocation decision satisfying resource capacity constraints that the total amounts of allocated resources cannot exceed resource capacities at each BS.
%	\item \textcolor{blue}{ We conduct extensive simulations based on real-world highway vehicle flow traces. Trace-driven simulation results indicate the effectiveness of RAWS in reducing overall system cost while violating QoS constraints with an extremely low probability, as compared to RL benchmarks.}
\end{itemize}

%\begin{itemize}
%	\item We develop a mobility-aware RAN slicing framework for delay-sensitive and delay-tolerant autonomous driving services, in which spectrum and computing resources allocation and traffic offloading are jointly decided. The proposed RAN slicing framework takes the mobility into consideration by incorporating the impacts of handover latency and vehicle density on slice performance;
%	\item We formulate the dynamic RAN slicing problem as a stochastic optimization problem whose objective is to minimize the long-term system cost;
%	\item We propose a DDPG-based RAN slicing algorithm to determine the optimal spectrum resource, computing resource and traffic offloading ratio at each slicing window in an online manner.
%\end{itemize}

The remainder of this paper is organized as follows. Related works are reviewed in Section~\ref{sec:related_work}. {The system model, the RAN slicing framework, and the problem  formulation are presented in Section~\ref{sec:system_model}.} The two-layer  RAN slicing algorithm is proposed in Section~\ref{sec:two_layer_algorithm}. Simulation results are given in Section~\ref{sec:simulations}. Finally, Section~\ref{sec:conclusions} concludes this research work.

\section{Related Work}\label{sec:related_work}
%\subsection{Autonomous driving}
%Defined by the society of automotive engineers international, there are five levels of automated driving, which are
%Zhang \emph{et. al} proposed a conceptual collaborative computing framework, including on-board, edge and cloud computing, to support autonomous driving \cite{zhang2019mobile}.  Another interesting work developed an edge service placement scheme for object detection service, in which an AV dynamically associates to roadside base stations (BSs) to minimize long-term service delay \cite{wang2019delay}. Since nearby vehicles can be leveraged to execute computing tasks, a task distribution scheme is proposed to minimize system cost \cite{su2018distributed}. Different existing works, our work targets at providing autonomous driving service via RAN slicing technology.

%\subsection{RAN slicing}
Network slicing can be divided into core network slicing and  RAN slicing. Different from core network slicing which has been widely investigated, RAN slicing is still in its infancy, which has garnered much attention from both industry and academia recently. In the industry, the third generation partnership project (3GPP) devotes to standardizing RAN slicing in the fifth generation network~\cite{3GPP2017}.  Recent standardization efforts are discussed in~\cite{afolabi2018network}. In addition, several proof-of-concept RAN slicing systems have been developed, e.g., Orion~\cite{foukas2017orion}. In the academia, there are a number of research works investigating resource allocation problems in the  RAN slicing. Ye \emph{et al.} propose a communication resource slicing scheme for a two-tier cellular network~\cite{ye2018dynamic}, in which radio spectrum is allocated to a delay-oriented machine-to-machine service and a throughput-oriented data service. Xiao \emph{et al.} study computing resource allocation among slices in a fog computing framework~\cite{xiao2018dynamic}. Li \emph{et al.} propose a hierarchical soft RAN slicing framework to enable resource sharing among BSs, which can enhance resource multiplexing gain~\cite{li2020hierarchical}.  Another important line of works focuses on RAN slicing in vehicular networks. A  RAN slicing architecture is proposed for supporting diverse IoV services in~\cite{zhuang2019sdn}, in which a software defined networking (SDN) controller jointly orchestrates communication, caching and computing resources. Within this architecture, a soft RAN slicing framework is proposed in~\cite{zhang2020hierarchical} to support multiple roadside caching services, in which network resources of slices can be opportunistically reused to enhance resource multiplexing gain. {Different from the existing works that focus on resource allocation among slices, this work deals with the impact of spatially uneven vehicle traffic density on the system performance. We propose a workload distribution mechanism for BSs' workload balance.}

%Different from the above works, we take the workload scheduling among roadside BSs into account, and jointly optimize resource allocation and workload scheduling in dynamic vehicular networks. 

%To enable autonomous driving services, a core network slicing framework was investigated in~\cite{chekired20195g}, where slices are created based on the types of autonomous driving functions. 
%Another work developed a Q-learning based approach for radio resource allocation for vehicle-to-everything services~\cite{albonda2019efficient}. 

%\subsection{Learning-aided RAN Slicing}
Advanced machine learning algorithms have been widely applied in the network slicing, such as slice resource demand prediction \cite{gutterman2019ran}, virtual network function deployment~\cite{pei2019optimal}, and traffic flow scheduling~\cite{gu2019intelligent}. Very recently, RL-based algorithms are applied to deal with complicated resource allocation problems in  RAN slicing~\cite{chen2019multi}. In~\cite{roig2019management}, a deep RL algorithm is proposed to dynamically deploy virtual network functions by adjusting computing and storage resources. Xiang \emph{et al.} study the problem of content caching and mode selection for multiple RAN slices, in which an RL algorithm is proposed to address the challenges of time-varying channel and the unknown content popularity information~\cite{xiang2020realization}. Different from the existing studies, our work focuses on a \emph{constrained} RAN slicing problem targeting at satisfying coupled constraints, which cannot be directly solved by traditional RL algorithms. We propose a novel two-layer constrained RL algorithm by leveraging the property of the considered problem.

\section{{System Model and Problem Formulation}}\label{sec:system_model}
In this section, we first establish the network model under consideration, and then present a dynamic RAN slicing framework for vehicular networks. Within the framework, we derive QoS constraints for  the considered services. Finally, the overall system cost model is given to evaluate the performance of a RAN slicing policy. %Table I provides a summary of the important notations used hereinafter.

\subsection{Network Model}
{As shown in Fig. \ref{Fig:considered_scenario}, we consider an MEC-enabled RAN along a road segment consisting of $N$ BSs, each  with a circular coverage of radius $r$.} {The set of BSs is denoted by $\mathcal{N}$. }All the BSs are connected to an SDN controller, which is in charge of collecting network information and enforcing RAN slicing decisions. Each BS is equipped with a powerful computing server. Vehicles traversing the road segment can associate with a nearby BS and offload their computation tasks to BSs via wireless communication links. Then, BSs complete the computation tasks and send results to vehicles.  %The illustration of the considered scenario is shown in 

\begin{figure}[t]
	\centering
%		\vspace{-0.5cm}
	\renewcommand{\figurename}{Fig.}
	\includegraphics[width=0.4\textwidth]{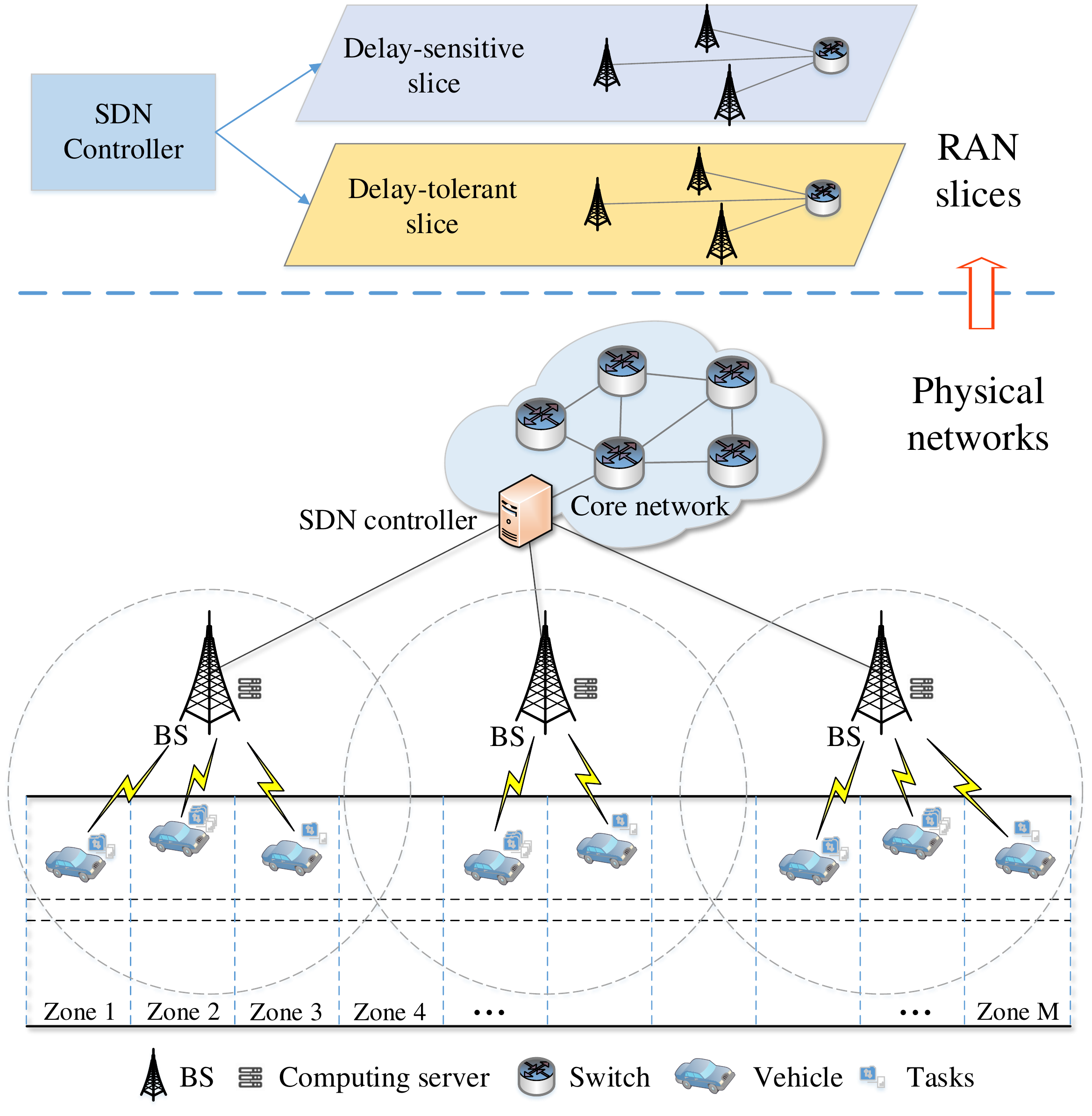}
%		\vspace{-0.3cm}
	\caption{Dynamic RAN slicing framework for vehicular networks to support different IoV services.}
	\label{Fig:considered_scenario}
		\vspace{-0.3cm}
\end{figure}

We adopt a zone-based model to characterize spatially uneven vehicle traffic density along the road.  The considered road segment is divided into $M$ zones with equal length $L$, and BSs can identify these zones according to their locations. The set of zones is denoted by $\mathcal{M}$. The vehicle density of these zones is denoted by $\{\rho_1, \rho_2, ..., \rho_M\}$, and the corresponding average vehicle velocity of these zones is denoted by $\{v_1, v_2, ..., v_M\}$. {If a highway is considered, the relationship between the vehicle velocity and the vehicle density of a zone can be characterized via the fluid flow mobility model~\cite{tan2010analytical}, i.e., $v_m=v_f\left(1-{\rho_m}/{\rho_{max}}\right)$, where $v_f$ and $\rho_{max}$ denote the free flow speed and the maximum vehicle density of the road, respectively.}  Based on the availability of BSs, zones can be categorized into  two types: (1) \emph{non-overlapped zones} $\mathcal{M}_n$, which are  covered by a single BS, e.g., Zone~2 in Fig.~\ref{Fig:considered_scenario}. Let binary variable $c_{m, n}\in\{0,1\}$ denote the association pattern, where $c_{m, n}=1$ indicates that non-overlapped zone~$m$ is associated with BS~$n$. Here, we consider the aggregated computation workload within each zone, since the RAN slicing framework operates based on macroscopic vehicle traffic \cite{gutterman2019ran}. The workload of zone $m$ can only be distributed to BS~$n$; and (2) \emph{overlapped zones} $\mathcal{M}_o$, which are covered by multiple BSs, e.g., Zone~4 in Fig. \ref{Fig:considered_scenario}.  For the simplicity of analysis, vehicles in an overlapped zone can only associate with two nearest BSs. This assumption is reasonable since the channel condition deteriorates with the increase of connection distance. The association pattern is indicated by two binary variables, $a_{m,{n} }\in \{0, 1\}$ and $b_{m,{n}' }\in \{0, 1\}$, where  $a_{m,{n} }= 1$ and $b_{m,{n}' }= 1$ denote that zone $m$ is associated with BSs $\{{n}, {n}'\}$. The workload of the zone can be distributed to these two BSs. Let $\mathbf{B} \in \mathbb{R}^{|\mathcal{M}_o| \times  K}$ denote the \emph{workload distribution} decision. Each element, $\beta_{m,k}\in [0, 1]$, represents the fraction of the workload of service $k$ in zone $m$ that is distributed to BS~$n$. Correspondingly, the rest workload fraction, $1-\beta_{m,k}$, is distributed to BS $n'$. %By scheduling the workload from overloaded BSs to underutilized BSs, the workload scheduling decision can balance BS workload and enhance the system performance.
% Note that for a zone out of BS's coverage, workload scheduling decision is 0.%Note that all AVs need to associate to a BS for task processing, which means that the condition $\sum_{n=1}^{N} \beta_{m,n}=1, \forall m\in \mathcal{M}$ holds.%Note that the load balance includes both communication and computing loads since computation tasks consume two types of resources. 
%For simplicity of theoretical analysis, we consider a vehicle can only associate to one BS.  

\subsection{Dynamic RAN Slicing Framework}
{Two types of IoV services, denoted by $\mathcal{K}=\{u, e\}$, are considered. Here, {$K=|\mathcal{K}|=2$ represents the number of the considered services}.  One is a delay-sensitive service, denoted by service~$u$, which has the maximum tolerable delay constraint. Taking the cooperative sensing service for example, the image captured by on-board cameras can be processed by BSs along with the sensing data from the roadside infrastructure. For road safety, the maximum tolerable delay of the cooperative sensing service is 100-150\;\emph{ms}~\cite {lin2018architectural}. The other service is delay-tolerant, denoted by service~$e$. Taking the HD map creation service for example, vehicles collect and upload fresh HD map data, and then BSs detect the changes from the fresh data and update the HD map~\cite{zhang2019mobile}.

{We present a dynamic RAN slicing framework to support the considered two services,\footnote{{The presented RAN slicing framework can be extended to the case with multiple delay-sensitive and delay-tolerant services, in which increasing the number of services is to increase the dimension of decisions.} } and mainly consider delay as the QoS metric of the considered services.} As shown in Fig.~\ref{Fig:considered_scenario}, two slices are constructed: (1) a delay-sensitive  slice, requires that service delay should be less than the maximum tolerable delay; and (2) a delay-tolerant slice, requires the stability of the queuing systems for computation tasks since they are offloaded and then processed. Detailed QoS constraints are presented for these two services in Subsection~\ref{subsec:QoS_model}. 

%\begin{figure}[t]
%	\centering
%	%	\vspace{-0.5cm}
%	\renewcommand{\figurename}{Fig.}
%	\includegraphics[width=0.6\textwidth]{figs/Timing.pdf}
%	%	\vspace{-0.3cm}
%	\caption{Procedure of the proposed dynamic RAN slicing framework.}
%	\label{Fig:Timing}
%\end{figure}
 The proposed RAN slicing framework operates in a time-slotted manner after the slices are constructed. Time is partitioned into multiple \emph{slicing windows}, indexed by $t\in\mathcal{T}= \{1,2,3, ..., T\}$. Within a slicing window,  RAN slicing decisions remain unchanged, and the vehicle traffic density is assumed to be stationary. {In each slicing window, the SDN controller: (1) obtains the average vehicle density  of all the zones in the current slicing window;\footnote{{The average vehicle density in the current slicing window can be predicted based on historical vehicle density information with high accuracy~\cite{han2019short}. The vehicle density can be collected via many methods, such as using roadside loop detectors~\cite{chen2001freeway} or on-board global positioning system (GPS) sensors.  }} (2) makes  workload distribution and  resource allocation decisions; and (3) evaluates the system performance based on the feedback from BSs at the end of the slicing window.} Specifically, the \emph{resource allocation} decision includes the allocation of  radio spectrum  and computing resources for slices at all the BSs. The radio spectrum resource is allocated in a unit of subcarrier with bandwidth $W$. Let $S^t_{n,u}$ and $S^t_{n,e}$ denote the number of subcarriers allocated to the delay-sensitive and delay-tolerant services at BS $n$, respectively. The computing resource is allocated in a unit of virtual machine (VM) instance with central processing unit (CPU) frequency~$F$~\cite{ghaznavi2015elastic}.  Let $C^t_{n,u}$ and $C^t_{n,e}$ denote the number of VM instances allocated to the delay-sensitive and delay-tolerant services at BS $n$, respectively. In summary, RAN slicing decisions can be represented by the following matrices:
\begin{equation}
\begin{split}
\mathbf{S}^t&= \left(  {S}^t_{n,k}\in \mathbb{Z}^+: n\in \mathcal{N}, k \in \mathcal{K}  \right), \\
\mathbf{C}^t&= \left(  {C}^t_{n,k}\in \mathbb{Z}^+: n\in \mathcal{N}, k \in \mathcal{K}  \right),\\
\mathbf{B}^t &=\left( \beta^t_{m,k} \in \left[0,1\right]: m \in \mathcal{M}_o, k\in \mathcal{K} \right),
\end{split}
\end{equation}
where $\mathbb{Z}^+$ represents the set of positive integers. Due to spatial-temporal variations of vehicle traffic density, the resource allocation and workload distribution decisions should be dynamically made in each slicing window. %At the end of the slicing window, SDN controller 

\subsection{QoS Constraints of Considered Services}\label{subsec:QoS_model}

%\subsection{Service Task Model} 
%When vehicles are driving through the highway, a large amount of computation tasks are generated from on-board sensors, such as high resolution cameras modules and LiDAR, for different services. For simplicity, autonomous driving services in our work are divided into two  types based on the constraint of service latency: the delay-sensitive service and the delay-tolerant service. The delay-sensitive service has the constraint of the maximum service latency, while the delay-tolerant service does not have service latency constraint. 
In this subsection, we derive QoS constraints for the delay-sensitive  and  delay-tolerant services based on the queuing theory. A computation task is characterized by a three-parameter model $\{\xi_k, \eta_k, \lambda_k\}$, $\forall k \in \mathcal{K}$~\cite{sun2017emm}, where $\xi_k$ denotes the task data size (in bits), $\eta_k$ denotes the average task computation intensity (in CPU cycles per task), and  $\lambda_k$ denotes the task arrival rate per vehicle.  For notation simplicity, we omit $t$ in $\beta_{m,k}^t$, $\rho_m^t$, $C_{n,k}^t$, and $S_{n,k}^t$ in this subsection.%Since different vehicles may have different task generation rates, $\lambda_k$  is assumed to follow a uniform distribution $\lambda_k\in U(\lambda_k_{min}, \lambda_k_{max})$ with a mean rate $ \bar{\lambda}^k$. 

%In this subsection, we apply the queuing theory to model the QoS for delay-sensitive and delay-tolerant services. 

\subsubsection{Delay-sensitive service} 
The service delay consists of three parts: task offloading delay, task processing delay, and handover delay, as analyzed in the following.

Firstly, the task offloading delay represents the average time taken for offloading a task from a vehicle to  a BS. {The average transmission rate between BS $n$ and the covered zones is denoted by~${R}_n$.} Due to the stochastic wireless channel condition, the task transmission time from a vehicle to a BS is assumed to follow an exponential distribution with rate  $S_{n,u}{R}_n/\xi_u$~\cite{zhang2020hierarchical}. Task arrivals from each vehicle are assumed to follow a Poisson process~\cite{sun2017emm}, and hence the aggregated task arrivals also follow a Poisson process. The rate of task arrivals at BS~$n$ is $\chi_{n,u}+\sum_{m\in \mathcal{M}_o}\psi_{m,n, u}  \beta_{m,u}$, which consists of task arrivals  from the non-overlapped zones and overlapped zones. Here, $\chi_{n,u}=\sum_{m\in \mathcal{M}_n} c_{m,n} {\lambda}_u \rho_mL +\sum_{m\in \mathcal{M}_o}b_{m,n}{\lambda}_u\rho_m L $, and $\psi_{m,n,u} =\left(a_{m,n}-b_{m,n}\right) {\lambda}_u\rho_m L$ are constants, where ${\lambda}_u\rho_m L$ denotes the workload of zone $m$ of service $u$. The task offloading process can be modeled as an M/M/1 queue~\cite{ma2020cooperative}, and hence the average offloading delay at BS~$n$ is given by
\begin{equation}\label{equ:offloading_delay}
D_{n, u}^o =\frac{1}{S_{n,u}{R}_n/\xi_u-\chi_{n,u}-\sum_{m\in \mathcal{M}_o}\psi_{m,n, u}  \beta_{m,u}}, \forall n\in \mathcal{N}.
\end{equation}
Constraint
\begin{equation}\label{equ: constraint_U_1}
 S_{n,u}- \hat{\chi}_{n,s, u}- \kappa_{s,u}\sum_{m\in \mathcal{M}_o}\psi_{m,n,u}  \beta_{m,u} \geq 0, \forall n\in \mathcal{N}
\end{equation} 
holds to ensure the stability of the task offloading queue at BS $n$, which also indicates  radio spectrum allocation decision $ S_{n,u}$ and  workload distribution decision $ \beta_{m,u}$ are coupled. In \eqref{equ: constraint_U_1}, $\kappa_{s,u}={\xi}_u/{{R}_n}$ and $\hat{\chi}_{n,s,u}=\kappa_{s,u}\chi_{n,u}$ are constants.

Secondly, the task processing delay represents the average time of executing a task. The arrived tasks at the BS are placed in the computing queue until being processed. We assume the computation intensity per task follows an exponential distribution~\cite{xu2018joint}. Hence, the departure process is a Poisson process with rate $C_{n,u}F/\eta_u$. Similar to the analysis in the task offloading delay, the task processing delay at BS $n$ is given based on the M/M/1 queue theory, i.e.,
\begin{equation}\label{equ:processing_delay}
D^c_{n,u}=\frac{1}{C_{n,u}F/\eta_u-\chi_{n,u}-\sum_{m\in \mathcal{M}_o}\psi_{m,n,u}  \beta_{m,u}}, \forall n\in \mathcal{N}, 
\end{equation}
where  constraint 
\begin{equation}\label{equ: constraint_U_2}
C_{n,u}- \hat{\chi}_{n,c,u}- \kappa_{c,u}\sum_{m\in \mathcal{M}_o}\psi_{m,n,u}  \beta_{m,u}\geq 0, \forall n\in \mathcal{N} %\frac{\eta^u}{F}\left(\chi_{n,u}+\sum_{m\in \mathcal{M}_o}\psi_{m,n,u}  \beta_{m,u}\right), \forall n\in \mathcal{N} 
\end{equation} 
holds to ensure the stability of processing queue at BS $n$. In \eqref{equ: constraint_U_2}, $\kappa_{c,u}={\eta_u}/{F}$ and $\hat{\chi}_{n,c,u}=\kappa_{c,u}\chi_{n,u}$ are constants.%Note that $\kappa_{c, u}=\eta^uL {\lambda}^u$.

Thirdly, the average handover delay experienced by one task is defined as the ratio between the overall handover delay and the overall number of generated tasks of a vehicle in a road segment. The overall handover delay can be computed as the product of one-time handover delay and the number of handovers~\cite{ibrahim2016mobility}. The number of handovers can be deemed as the number of BSs. The overall number of generated tasks is the product of the task arrival rate and the sojourn time that a vehicle drives through the road segment, which is given by $\lambda_u\sum_{m\in \mathcal{M}}{L}/{v_m}$. Hence, the average handover delay is
\begin{equation}\label{equ: handover_delay}
D_{u}^{h}=\frac{D_{H}N}{\lambda_u\sum_{m\in  \mathcal{M}}{L}/{v_m}},
\end{equation}
where $D_{H}$ denotes the one-time handover delay.
%	The handover delay in MEC services can be caused by both communication and computing. For communication part, multiple control signaling messages in the handover procedure need to exchange for seamless handover between neighboring two BSs, which is performed by mobility management entity in the core network. For the computing part, the task processing may require results from previous task processing \cite{sun2017emm}, which also incurs computation result migration latency. For simplicity of analysis, the one-time handover latency in this paper incorporates both control signaling latency in communication part and the task migration in computing side. Equ. \eqref{equ: handover_delay} indicates that the handover delay depends on the vehicle velocity and traffic offloading ratios.  
%\end{enumerate}

With \eqref{equ:offloading_delay}, \eqref{equ:processing_delay} and \eqref{equ: handover_delay}, taking different BSs' workloads into account, the average service delay is given by
\begin{equation}\label{equ:delay_definition}
D_u=	D^h_{u}+ \sum_{n\in \mathcal{N}}\frac{\chi_{n,u}+\sum_{m\in \mathcal{M}_o}\psi_{m,n,u}  \beta_{m,u}}{\sum_{m\in \mathcal{M}}\lambda_u\rho_mL}\left(D^o_{n,u}+	D^c_{n,u}\right),
\end{equation} 
where $\sum_{m\in \mathcal{M}}\lambda_u\rho_mL$ denotes the amount of all the zones' workloads	. The service delay in slicing window $t$ should be less than the maximum tolerable delay $D^{th}$, i.e., $D_{u}^t\leq D^{th}$. {Similar to many existing works~\cite{sun2017emm, xu2018joint, chen2015efficient}, we assume that the downlink latency is negligible in the service delay, since the data size representing output computation results is much smaller than that of input computation tasks in various applications.} 

%\textcolor{blue}{For the delay-sensitive service, . In summary, the delay-sensitive service has the maximum tolerable delay constraint and the queue stability constraints.}

\subsubsection{Delay-tolerant service} 
The QoS constraint of the delay-tolerant service is to guarantee the queue stability, thereby avoiding queue overflow. The stability constraints are applied to the task offloading  and  task processing queues.
\begin{itemize}
	\item Stability of  task offloading queue: The analysis of the delay-tolerant service is similar to that of the delay-sensitive service in \eqref{equ: constraint_U_1} based on the queuing theory. %the aggregated task arrival rate for the delay-tolerant service at BS $n$ is $\chi_{n,e}+\sum_{m\in \mathcal{M}_o}\psi_{m,n,e}  \beta_{m,e}$ by considering data traffic from non-overlapped zones and overlapped zones, where $\chi_{n,e}=\sum_{m\in \mathcal{M}_n} c_{m,n}\rho_m L {\lambda}^e+\sum_{m\in \mathcal{M}_o}b_{m,n}\rho_m L {\lambda}^e$ and $\psi_{m,n,e} =\left(a_{m,n}-b_{m,n}\right)\rho_m L {\lambda}^e$ are constants. The corresponding service rate is  $S_{n,e}\bar{R}_n/\xi_e$. 
	To guarantee the stability of the task offloading queue, constraint
	\begin{equation}\label{equ: constraint_B_1}
	S_{n,e} - \hat{\chi}_{n,s,e}-\kappa_{s,e}\sum_{m\in \mathcal{M}_o}\psi_{m,n,e}  \beta_{m,e}\geq 0, \forall n\in \mathcal{N}
	\end{equation}
	holds for  BS $n$. In \eqref{equ: constraint_B_1}, $\kappa_{s,e}={\xi_e}/{{R}_n}$, $\hat{\chi}_{n,s,e}=\kappa_{s,e}{\chi}_{n,e}$, ${\chi}_{n,e}=\sum_{m\in \mathcal{M}_n} c_{m,n}{\lambda}_e\rho_m L +\sum_{m\in \mathcal{M}_o}b_{m,n} {\lambda}_e\rho_m L$, and $\psi_{m,n,e} =\left(a_{m,n}-b_{m,n}\right) {\lambda}_e \rho_m L $ are constants. %, where $\kappa_{s, b}= {L {\lambda}^b\xi_b}/{\bar{R}_n}$.
	%According to \eqref{equ: task_transmission_latency_OB}, the task transmission latency depends on the allocated spectrum resource $S_n^o$, the traffic offloading ratio $\beta_{m,n}$ and the current vehicle traffic flow $\lambda_t$.
	\item Stability of  task processing queue: %The task arrival rate at BS $n$ is $\sum_{m\in \mathcal{M}}\beta_{m,n}\rho_m L {\lambda}^e$, and the service rate is $C_{n,e}/\eta^e$. 
	Similar to the analysis for the delay-sensitive service in \eqref{equ: constraint_U_2}, the constraint on the allocated computing resource for the delay-tolerant service is given by
	\begin{equation}\label{equ: constraint_B_2}
		C_{n,e}- \hat{\chi}_{n,c,e}-\kappa_{c,e}\sum_{m\in \mathcal{M}_o}\psi_{m,n,e}  \beta_{m,e}\geq 0, \forall n\in \mathcal{N},
	\end{equation}
	where $\kappa_{c,e}={\eta_e}/{F}$ and $\hat{\chi}_{n,c,e}=\kappa_{c,e}\chi_{n,e}$ are constants.
%	where $\kappa_{c, b}=L {\lambda}^b \eta^b $.
\end{itemize}

\subsection{Overall System Cost Model}\label{section:overall_cost}
%The objective of dynamic RAN slicing is to reduce the overall system cost in the long-term. 
The overall system cost is defined to evaluate the performance of a RAN slicing policy, which includes operation cost, slice reconfiguration cost, delay constraint violation cost, and system revenue in each slicing window. 

\subsubsection{Operation cost}  
The operation cost refers to the cost of allocating subcarriers and VM instances for  slices. Let $U_o^t$ denote the operation cost in slicing window $t$, given by
\begin{equation}\label{equ:Operation_Cost}
U_o^t=\sum_{n\in \mathcal{N}} \sum_{k\in\mathcal{K}} \left( w_{o,s}S_{n,k}^t+ w_{o,c}C_{n,k}^t\right).
\end{equation}
{In \eqref{equ:Operation_Cost}, $w_{o,s}$ and $w_{o,c}$ represent the unit costs of using a subcarrier and a VM instance, respectively, which should be set to be equivalent to account for radio spectrum and computing resource fairness. } %Here, we consider the homogeneous resource cost for all BSs, while our case can be readily extended to a heterogeneous resource cost case. %Here, $\mathcal{K}$ denotes the set of two services, which includes delay-sensitive and delay-tolerant services in this report.

\subsubsection{Slice reconfiguration cost}  
{When vehicle traffic density varies, the SDN controller may reconfigure slices and adjust the allocated network resources of slices, which renders the slice reconfiguration cost. Taking the computing resource adaption for example, VM adaption in practical systems (e.g., Docker~\cite{bernstein2014containers}) involves booting a new VM instance or resizing an instantiated VM instance, which incurs additional delay for preparing resources~\cite{roig2019management} or brings hardware wear-and-tear~\cite{wang2017online}. }Let $U_r^t$ denote the slice reconfiguration cost in slicing window $t$, which measures the difference of resource allocation decisions across two subsequent slicing windows, i.e.,
\begin{equation}\label{equ:Reconfiguration_Cost}
\begin{split}
U_r^t&=\sum_{n\in \mathcal{N}}\sum_{k\in \mathcal{K}} \left(w_{r,s}\left[S_{n,k}^{t+1}-S_{n,k}^t \right]^+\right.\\
&\left.+w_{r,c}\left[C_{n,k}^{t+1}-C_{n,k}^t \right]^+\right),
\end{split}
\end{equation}
	where function $\left[x^{t+1}-x^t\right]^+=\max \{x^{t+1}-x^t, 0\}$ is applied to capture the amount of the increased resources at the BS when transiting from slicing window $t$ to slicing window $t+1$. Note that the cost associated with reducing  resource is omitted, since resource release can be completed quickly with negligible cost~\cite{wang2017online}. {In \eqref{equ:Reconfiguration_Cost}, $w_{r,s}$ and $w_{r,c}$ represent the unit costs of  subcarrier and VM instance reconfiguration, respectively, which should take larger values than that of resource usage in order to discourage frequent slice reconfiguration.} {Note that the reconfiguration of workload distribution decision does not incur system cost. This is because workload distribution decision represents the association pattern between zones' workloads and BSs, which does not consume physical resources.} %the slice reconfiguration cost belongs to dynamic cost that measured across slicing windows. %Note that the cost associated with reducing the resource provisioning is omitted since resource release can be completed quickly with a negligible cost.

\subsubsection{Delay constraint violation cost} 
The delay constraint violation cost refers to the penalty once the service delay exceeds the maximum tolerable delay constraint, i.e.,
\begin{equation}\label{equ:QoS_constraint_violation_cost}
U_q^t= w_{v}  \mathbbm{1}\{D^t_{u}>D^{th}\},
\end{equation}
	where  $\mathbbm{1}{\{x\}}$ is an indicator function. If event $x$ is true, $\mathbbm{1}{\{x\}}=1$; otherwise, $\mathbbm{1}{\{x\}}=0$. {Here, $w_{v}$ denotes the unit cost for delay constraint violation, which should take an extremely large value to penalize constraint violation. }

\subsubsection{System revenue} 
The system revenue is related to the achieved service delay, which is given by
\begin{equation}\label{equ:revenue}
U_m^t= w_{r} \left[D^{th} - D^t_{u}\right]^+.
\end{equation}
{Here, $w_{r}>0$ is the unit revenue, which should take a relatively large value to encourage low-latency services. }
%
%\subsubsection{System revenue}
%\textcolor{ blue}{The system revenue $U_q^t$ represents the achieved revenue due to a low delay performance which encourages low latency service performance, which is defined as 
%\begin{equation}\label{equ:QoS_constraint_violation_cost}
%U_q^t=w_{r} \left[D^{th} - D^t_{u}\right]^+
%\end{equation}
%where, $w_{r}>0$ is the weighted revenue.}
%	
	
With \eqref{equ:Operation_Cost}-\eqref{equ:revenue}, the overall system cost in slicing window $t$ is given by
\begin{equation}\label{equ:system_cost}
U^t=U_o^t+ U_r^t+U_q^t-U_m^t.
\end{equation}
\begin{remark}
	The slice reconfiguration cost component belongs to \emph{dynamic cost} that measures the performance of RAN slicing decisions across slicing windows, while  the rest of  cost components belong to \emph{static cost} that measures the performance of  RAN slicing decisions in the current slicing window.  %When the resource allocation decision is made, the workload scheduling decision only impacts the static cost, which means it impacts the instantaneous system cost. This property is leveraged in the algorithm design in Section~\ref{sec:two_layer_algorithm}. %Note that the stability constraints of two services are considered as the constraints in the problem formulation, which are not incorporated in the system cost
	%The goal of R
\end{remark}

%------------------------------------------------------------------------------------------------------------------------------------------------------------------------------------------------------------------------------------------------------------------------------------------------------------------------------------------

\subsection{{Problem Formulation}}

%In this section, we first formulate the RAN slicing problem as a stochastic optimization problem, which is difficult to be solved via traditional optimization methods. Since this problem falls within MDP framework, we reformulate the problem as an MDP problem. %To solve this problem, a model-free reinforcement learning method is proposed. 

%\subsection{Optimization Problem}
Since vehicle traffic density varies spatially and temporally, minimizing the overall system cost in the long-term while satisfying QoS constraints is paramount, especially from the perspective of the network operator. Hence, the dynamic RAN slicing problem is formulated as follows: 
\begin{subequations}\label{Problem 1}
	\begin{align}
	{\mathbf{P}_0:} \underset{\{\mathbf{S}^t, \mathbf{C}^t, \mathbf{B}^t\}_{t\in \mathcal{T}}}{\text{min}}\,\,
	& \mathbb{E}\left[\lim\limits_{T\to \infty} \frac{1}{T}\sum_{t=1}^{T} U^t \right]\nonumber \\%e^{-(1-P_{F}^T) \frac{N}{N_f}} (1-P_{F}^T) \frac{N}{N_f} \\
	 \text{s.t.}\,\,
	%& & M\in \mathbb{Z^+}\\
	%& & \\
	%& &&\\
	%	& & D_n^u(t)<D^{th}, \forall t\in \mathcal{T}, n\in \mathcal{N} \label{equ:constraint_1}\\%, \forall n\in \mathbb{N}, m\in \mathbb{M}, t\in \mathbb{T} \\
%	& & S_{n,k}^t> \kappa_{s,k} \sum_{m\in \mathcal{M}}\beta_{m,n}^t\rho_m^t, \forall  k\in \mathcal{K}, n\in \mathcal{N}  \label{equ:P1constraint_2}\\%, \forall n\in 
%	%	&&& S_{n,b}^t>\kappa_{s, b} \sum_{m\in \mathcal{M}}\beta_{m,n}^t\rho_m^t, \forall n\in \mathcal{N}\\
%	&&& 	C_{n,k}^t> \kappa_{c, k} \sum_{m\in \mathcal{M}}\beta_{m,n}^t\rho_m^t, \forall  k\in \mathcal{K}, n\in \mathcal{N}  \label{equ:P1constraint_3}\\
	%	&&& 	C_{n,b}^t> 	\kappa_{c, b} \sum_{m\in \mathcal{M}}\beta_{m,n}^t\rho_m^t
	%	, \forall n\in \mathcal{N}  \label{equ:constraint_4}\\
	& \sum_{k\in \mathcal{K}}S_{n,k}^t\leq S_{n}^{\text{max}},\forall  n\in \mathcal{N} \label{equ:P0constraint_4}\\
	& \sum_{k\in \mathcal{K}} C_{n,k}^t\leq C_{n}^{\text{max}}, \forall n\in \mathcal{N} \label{equ:P0constraint_5}\\
	& S_{n,k}^t, C_{n,k}^t \in \mathbb{Z}^+,\forall  k\in \mathcal{K}, n\in \mathcal{N} \label{equ:P0constraint_6}\\
%	&&& \sum_{n\in \mathcal{N}}\beta_{m,n}^t=1,\forall  m\in \mathcal{M}\label{equ:constraint_6}\\
	& 0\leq \beta_{m,k}^t\leq 1,\forall   k\in \mathcal{K}, m\in \mathcal{M}_o \label{equ:P0constraint_7}\\
%	&&& D_{u}^t\leq D^{th},  \label{equ:P0constraint_8}\\
	& \eqref{equ: constraint_U_1},  \eqref{equ: constraint_U_2}, \eqref{equ: constraint_B_1}, \text{and } \space \eqref{equ: constraint_B_2},\nonumber
	\end{align}
\end{subequations}
%\begin{subequations}\label{p1}
%	\begin{align}
%	\textrm{P1:} \,\, \min_{\left\{ \bm{\alpha}, \bm{\beta} \right\} } \,\,& \lim_{T \rightarrow \infty}{ \frac{1}{T}\sum_{t=1}^{T}{D_t } }\\
%	\textrm{s.t.} \,\,& (\ref{cons1}),\\
%	& \lim_{T \rightarrow \infty}{ \frac{1}{T}\sum_{t=1}^{T}{\left[ e_\text{o}(\alpha_t, \beta_t) + e_\text{l}(\alpha_t, \beta_t)\right]}} \leq \varepsilon,\\
%	& \alpha_t \leq N, \alpha_t \in \mathcal{L}_t,\\
%	& \beta_t \leq \beta_\text{max}, \beta_t  \in \mathbb{N},
%	\end{align}
%\end{subequations}
where $S_{n}^{\text{max}}$ and $C_{n}^{\text{max}}$ denote the capacity of subcarriers and VM instances at BS $n$, respectively. The problem is to jointly allocate radio spectrum and computing resources, and distribute zones' workloads to minimize the average long-term system cost in an online manner.  Constraints \eqref{equ:P0constraint_4} and \eqref{equ:P0constraint_5} are \emph{resource capacity constraints}, which guarantee that the total amounts of allocated subcarriers and VM instances for all the services cannot exceed the capacity at each BS.  Constraints \eqref{equ:P0constraint_6} and \eqref{equ:P0constraint_7} guarantee the feasibility of resource allocation and workload distribution decisions.   Constraints \eqref{equ: constraint_U_1},  \eqref{equ: constraint_U_2}, \eqref{equ: constraint_B_1} and  \eqref{equ: constraint_B_2} ensure the stability of queues for all the services, which indicates resource allocation and workload distribution decisions are coupled, i.e., \emph{coupled constraints}.

Problem $\mathbf{P}_0$ belongs to stochastic optimization due to  spatial-temporal variations of vehicle density. {Due to the lack of future information, finding a global optimal solution via optimization methods is very difficult. To approach the optimal solution, one way is to use RL algorithms. Traditional RL algorithms can be applied to solve stochastic optimization problems with simple constraints, in which the space of decisions can be shaped separately according to their constraints. However, traditional RL algorithms make resource allocation and workload distribution decisions separately, which may violate the coupled constraints in problem $\mathbf{P}_0$. % {To satisfy the coupled constraints, a modified RL algorithm with \emph{decision shaping} is plausible, which shapes the output decision according to constraints~\cite{shen2019lorm}.} Although decision shaping can satisfy constraints, it results in performance degradation, as in Section~\ref{sec:simulations}. 
To  solve the complicated  problem with coupled constraints, we propose a two-layer constrained RL algorithm.}

\section{Two-layer Constrained RL Algorithm }\label{sec:two_layer_algorithm}

%The decision variables in the problem exhibit a unique property. According to the QoS violation cost in \eqref{equ:QoS_constraint_violation_cost}, the workload scheduling decision only impacts the service delay in the current slicing window, thereby only impacting instantaneous system cost. According to  \eqref{equ:Reconfiguration_Cost}, slice reconfiguration cost couples resource allocation decision over time, and hence the resource allocation decision impacts long-term system cost. Due to the different property of  decision variables, 
{In this section, we first decouple the problem into an outer-layer resource allocation subproblem and an inner-layer workload distribution subproblem, and then propose a two-layer constrained RL algorithm to solve them together. } %\textcolor{blue}{Note that the proposed two-layer algorithm is a suboptimal solution since it suffers from performance loss due to the problem decomposition. }

\subsection{Inner Layer: Workload Distribution Subproblem}\label{subsec:workload_scheduling}
{The workload distribution subproblem is to minimize the long-term overall system cost via optimizing workload distribution decisions, i.e.,}
\begin{subequations}\label{Problem 1-long}
	\begin{align}
	{\mathbf{P}_{1}:}	 \underset{\{\mathbf{B}^t\}_{t\in \mathcal{T}}}{\text{min}}\,\,
	 & \mathbb{E}\left[\lim\limits_{T\to \infty} \frac{1}{T}\sum_{t=1}^{T} U^t \right] \nonumber \\%e^{-(1-P_{F}^T) \frac{N}{N_f}} (1-P_{F}^T) \frac{N}{N_f} \\
	 \text{s.t.}\,\,
%	& & 0\leq \beta_{m,u}^t\leq 1, \forall m\in \mathcal{M}_o  \\
	&  \eqref{equ: constraint_U_1},  \eqref{equ: constraint_U_2}, \eqref{equ: constraint_B_1},  \eqref{equ: constraint_B_2}, \text{and }  \eqref{equ:P0constraint_7}.\nonumber
	\end{align}
\end{subequations}
{According to the definition of the overall system cost  in \eqref{equ:system_cost},  workload distribution only impacts the static cost components, i.e., the delay bound violation cost and the system revenue, which are independent in each slicing window. Since the minimum value of the static cost components is achieved with the minimum service delay, the problem is to minimize the aggregated independent service delay in each slicing window. In addition,  constraints are independent in each slicing window. Hence, optimizing long-term optimization problem~$\mathbf{P}_{1}$ can  be converted  to individually optimizing multiple one-shot optimization problems, whose objectives are to minimize instantaneous service delay. }

{Moreover, each considered service has its own workload distribution decision variable, which can be optimized individually. For the delay-sensitive service, workload distribution variable ${\beta}_u^t=\{{\beta}_{m,u}^t\}_{m\in \mathcal{M}_o}\in \mathcal{R}^{|\mathcal{M}_o| \times 1}$ is optimized to minimize the average service delay in slicing window~$t$, which yields the following one-shot optimization problem:}
\begin{subequations}\label{Problem 1}
	\begin{align}
	{\mathbf{P}_{1,u}}:\,\, \underset{{\beta}_u^t}{\text{min}}\,\,
	&  D_u^t \nonumber \\%e^{-(1-P_{F}^T) \frac{N}{N_f}} (1-P_{F}^T) \frac{N}{N_f} \\
	\text{s.t.}\,\,
	&  \eqref{equ: constraint_U_1},  \eqref{equ: constraint_U_2}, \text{and }  \eqref{equ:P0constraint_7}.\nonumber
	\end{align}
\end{subequations}
%\begin{remark}
%	The decomposition procedure has two steps. In the first step, the joint problem in Eq. (17) is decomposed into a long-term resource allocation subproblem and a long-term workload scheduling subproblem ($\mathbf{P}_{1}$ in the following). The decomposition of the first step can result in performance loss. In the second step, the long-term workload scheduling subproblem is decomposed into multiple one-shot optimization problems as shown in Eq. (18). The decomposition of the second step does not incur performance loss. Moreover, due to the lack of future information, finding the global optimal solution for the long-term joint optimization problem in Eq. (17) is very difficult. Although the proposed two-layer algorithm suffers from performance loss due to the problem decomposition, it provides a potential solution to solve the complicated long-term problem with coupled constraints in an online manner. 
%\end{remark}

%where ${\beta}_u^t=\{{\beta}_{m,u}^t\}_{m\in \mathcal{M}_o}$ denotes the workload scheduling decision vector.  The problem is transformed from $\mathbf{P}_0$ given the resource allocation decision. %Note that delay-tolerant service only requires guaranteeing  
\begin{theorem}\label{theorem}
	One-shot workload distribution problem $\mathbf{P}_{1,u}$ is a convex optimization problem.
\end{theorem}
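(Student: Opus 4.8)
The plan is to verify the two defining ingredients of a convex program separately: the feasible set and the objective. Throughout the inner subproblem the resource allocation variables $S_{n,u}$ and $C_{n,u}$ are held fixed by the outer layer, so that every quantity appearing besides $\beta_u^t$ --- namely $\chi_{n,u}$, $\psi_{m,n,u}$, $\hat{\chi}_{n,s,u}$, $\hat{\chi}_{n,c,u}$, $\kappa_{s,u}$, and $\kappa_{c,u}$ --- is a constant. First I would observe that constraints \eqref{equ: constraint_U_1} and \eqref{equ: constraint_U_2} are each affine in $\beta_u^t$, and that \eqref{equ:P0constraint_7} is a box constraint; their intersection is therefore a convex polytope. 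Hence the feasible set is convex, and it only remains to show that the objective $D_u^t$ is convex in $\beta_u^t$.

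For the objective, I would introduce, for each BS $n\in\mathcal{N}$, the routed arrival rate $x_n(\beta_u^t)=\chi_{n,u}+\sum_{m\in\mathcal{M}_o}\psi_{m,n,u}\beta_{m,u}$, which is an affine function of $\beta_u^t$. Writing $A_n=S_{n,u}R_n/\xi_u>0$ and $B_n=C_{n,u}F/\eta_u>0$ for the (constant) service rates and $\Lambda=\sum_{m\in\mathcal{M}}\lambda_u\rho_m L$, equations \eqref{equ:offloading_delay} and \eqref{equ:processing_delay} give $D_{n,u}^o=1/(A_n-x_n)$ and $D_{n,u}^c=1/(B_n-x_n)$. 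Substituting into \eqref{equ:delay_definition} and noting that the handover term $D_u^h$ is independent of $\beta_u^t$, the objective takes the form
\begin{equation*}
D_u^t = D_u^h + \frac{1}{\Lambda}\sum_{n\in\mathcal{N}} g_n\bigl(x_n(\beta_u^t)\bigr), \qquad g_n(x)=\frac{x}{A_n-x}+\frac{x}{B_n-x}.
\end{equation*}

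The crux is then to show that each scalar map $g_n$ is convex on the admissible range $x<\min\{A_n,B_n\}$. I would use the identity $x/(A-x)=A/(A-x)-1$, so that $g_n''(x)=2A_n/(A_n-x)^3+2B_n/(B_n-x)^3>0$ whenever $x<\min\{A_n,B_n\}$; this positivity holds precisely on the feasible region, since the stability constraints \eqref{equ: constraint_U_1} and \eqref{equ: constraint_U_2} force both denominators to stay strictly positive. Convexity of $g_n$ in its scalar argument, composed with the affine map $x_n(\cdot)$, yields convexity of $g_n(x_n(\beta_u^t))$ in $\beta_u^t$; summing over $n$ and adding the constant $D_u^h$ preserves convexity. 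Together with the convex feasible set, this establishes that $\mathbf{P}_{1,u}$ is convex.

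I expect the objective, not the constraints, to be the main obstacle: the numerator $x_n$ (the workload actually routed to BS $n$) and the denominators $A_n-x_n$, $B_n-x_n$ (the residual service capacities) share the same decision variables, so the delay terms are genuine rational functions of $\beta_u^t$ rather than simple reciprocals. The decisive simplification is recognizing, via the rewriting $x/(A-x)=A/(A-x)-1$, that each term is an affine shift of the convex scalar function $1/(A_n-x)$ composed with an affine map, which makes the sign of the second derivative transparent; the only remaining care is to confirm that feasibility keeps the arguments inside the region where this second derivative is positive.
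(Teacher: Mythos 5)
Your proof is correct and follows essentially the same route as the paper's: both rewrite each delay term via the identity $x/(A-x)=A/(A-x)-1$, verify that the resulting second derivative is positive wherever the stability constraints \eqref{equ: constraint_U_1} and \eqref{equ: constraint_U_2} hold, and note that all constraints are affine in $\beta_u^t$. If anything, your packaging of the argument as a convex scalar function composed with an affine map is slightly tighter, since it yields positive semidefiniteness of the full Hessian directly, whereas the paper only computes the diagonal second partials $\partial^2 f/\partial \beta_{m,u}^2$.
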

\begin{proof}
	Proof is provided in Appendix~\ref{appendix:theorem}.
\end{proof}

{According to Theorem~\ref{theorem}, if the feasible region of problem $\mathbf{P}_{1,u}$ is non-empty, denoted by $\mathcal{F}_u\neq \emptyset$, the optimal workload distribution decision for the delay-sensitive service can be directly obtained via convex optimization solvers, such as CVX~\cite{grant2014cvx} and Gurobi~\cite{optimization2014inc}. However, the optimization problem can be infeasible if the allocated resource is insufficient to satisfy queue stability constraints in  \eqref{equ: constraint_U_1} and  \eqref{equ: constraint_U_2}. The infeasibility issue is solved by incorporating a penalty mechanism in the outer layer algorithm, which is detailed in Subsection~\ref{subsec: outer_layer}.}
	
{Similarly, for the delay-tolerant service, workload distribution decision ${\beta}_e^t=\{{\beta}_{m,e}^t\}_{m\in \mathcal{M}_o}$ is optimized only  to satisfy the queue stability constraints in \eqref{equ: constraint_B_1} and  \eqref{equ: constraint_B_2}. The corresponding problem, denoted by $\mathbf{P}_{1,e}$, is a simplified case of problem $\mathbf{P}_{1,u}$, which can also  be solved by convex optimization solvers. The feasible region of problem $\mathbf{P}_{1,e}$ is denoted by $\mathcal{F}_e$. In summary, when the resource allocation decision is given and optimization problems are solvable, the optimal workload distribution decision can be directly made via a convex optimization subroutine.}

{\textbf{Computational complexity analysis:} In order to use convex optimization solvers, such as CVX,  problem $\mathbf{P}_{1,u}$ is first transformed into a semi-definite programming (SDP) problem by introducing two sets of auxiliary variables. The computational complexity of solving the SDP problem by using the interior-point method is $\mathcal{O}\left(N_{ite}\left(3N-1\right)^{3.5}\right)$~\cite{ye2011interior}, where $N_{ite}$ represents the number of iterations before meeting the stopping criteria. Here, $3N-1$ is the number of variables in the transformed SDP problem, which includes workload distribution variable $\beta_u^t$ of dimension $N-1$, and auxiliary variables of dimension $2N$.  }

\subsection{Outer Layer: Resource Allocation Subproblem}\label{subsec: outer_layer}
The resource allocation subproblem is to allocate radio spectrum and computing resources with the objective of minimizing the average long-term system cost, i.e.,
\begin{subequations}\label{Problem 1}
	\begin{align}
	{\mathbf{P}_2:}	 \underset{\{\mathbf{S}^t, \mathbf{C}^t\}_{t\in \mathcal{T}}}{\text{min}}\,\,
	 &\mathbb{E}\left[\lim\limits_{T\to \infty} \frac{1}{T}\sum_{t=1}^{T} U^t \right]\nonumber \\%e^{-(1-P_{F}^T) \frac{N}{N_f}} (1-P_{F}^T) \frac{N}{N_f} \\
	 \text{s.t.}\,\,
	& \eqref{equ:P0constraint_4}, \eqref{equ:P0constraint_5}, \text{and } \eqref{equ:P0constraint_6}. %\sum_{k\in \mathcal{K}}S_{n,k}^t\leq S_{n}^{\text{max}}, \forall n\in \mathcal{N} \label{equ:P1constraint_1}\\
%	&&& \sum_{k\in \mathcal{K}} C_{n,k}^t\leq C_{n}^{\text{max}}, \forall n\in \mathcal{N} \label{equ:P1constraint_2}\\
%	&&& S_{n,k}^t, C_{n,k}^t \in \mathbb{Z}^+,\forall  k\in \mathcal{K}, n\in \mathcal{N}. \label{equ:P1constraint_3}
	%	&&& \sum_{n\in \mathcal{N}}\beta_{m,n}^t=1,\forall  m\in \mathcal{M}\label{equ:constraint_6}\\
	%	&&& \beta_{m,n}^t\in [0,1],\forall   n\in \mathcal{N}, m\in \mathcal{M}. \label{equ:constraint_7}
	\end{align}
\end{subequations} 
%The above problem is transformed from problem $\mathbf{P}_0$ when the workload scheduling decision. 
The preceding problem is to design a resource allocation policy to minimize long-term system cost while satisfying constraints, which falls into the class of MDP with infinite horizon. 

In the following, we reformulate the resource allocation subproblem as an MDP. Specifically, the SDN controller is modelled as an \emph{agent}. In slicing window $t$, given observed \emph{state} $s^t$, the SDN controller takes resource allocation \emph{action}~$a^t$. Then, the corresponding \emph{reward} $r^t$ is fed back from the environment, and the state evolves into new state $s^{t+1}$ according to state
transition probability $\mathrm{Pr}\left(s^{t+1}|s^t, a^t\right)$~\cite{sutton2018reinforcement}. In the MDP, the action, state, and reward are defined as follows. 
%\subsection{Equivalent MDP Problem}
%From the perspective of network service providers, reducing overall system cost in the long-term is paramount. Hence, our objective is to design a dynamic RAN slicing algorithm to minimize the long-term  system cost. 

\begin{itemize}
		\item{Action}: Corresponding to the optimizing variables in problem $\mathbf{P}_2$, the action is the radio spectrum and computing resource allocation decisions, i.e.,% 
	\begin{equation}
	a^t=\left\{
	%	\{\beta_{m,n}^t\}_{ \begin{subarray}{c}
	%		m\in \mathcal{M}\\ n\in \mathcal{N}	\end{subarray}  } ,
	\{S_{n,k}^t\}_{    \begin{subarray}{c} n\in \mathcal{N}\\ k\in \mathcal{K} \end{subarray}  },
	\{C_{n,k}^t\}_{    \begin{subarray}{c} n\in \mathcal{N}\\ k\in \mathcal{K}	\end{subarray}  } 
	\right\} .
	\end{equation}
	Each action element takes a positive integer value in order to satisfy constraint \eqref{equ:P0constraint_7}. %The action space  

	\item State: The system performance depends on the  vehicle density of all the zones in the current slicing window $\{\rho_m^t\}_{m\in \mathcal{M}}$, and the resource allocation decision in the previous slicing window. Hence, the state is defined as
	\begin{equation}
s^t=\left\{ \{\rho_m^t\}_{m\in \mathcal{M}}, \{S_{n,k}^{t-1}\}_{    \begin{subarray}{c} n\in \mathcal{N}\\ k\in \mathcal{K} \end{subarray}  },\{C_{n,k}^{t-1}\}_{    \begin{subarray}{c} n\in \mathcal{N}\\ k\in \mathcal{K} \end{subarray}  }\right\} .
\end{equation} 
State space is  $\left\{\left(\mathbb{R}^+\right)^{M} \right\}\times \left\{ \left(\mathbb{Z}^+\right)^{KN}\right\} \times  \left\{\left(\mathbb{Z}^+\right)^{KN}  \right\} $, which is  continuous.
%\begin{itemize}

	\item{Reward}: %At the end of each slicing window, the learning agent (SDN controller) obtains the reward from the environment. 
	%Minimizing the long-term overall system cost is equivalent to maximizing the cumulative reward.   Hence, 
{The following reward is defined to evaluate how good the action is under a state, i.e.,
	 \begin{equation}\label{equ:reward}
	r^t\left(s^t, a^t\right)=-\left(U^t \mathbbm{1} \left\{\mathcal{F}_u\neq \emptyset\right\}+w_f\sum_{k\in \mathcal{K}} \mathbbm{1} \left\{\mathcal{F}_k=\emptyset\right\}\right).
	\end{equation}
	In \eqref{equ:reward}, $w_f>0$ is the penalty weight, which should take an extremely large value. The reward consists of two terms. The first term is consistent with the objective function in problem $\mathbf{P}_2$, which is achieved when the workload distribution subproblem for the delay-sensitive service is feasible. Obviously, the reward is related to the average service delay in the current slicing window. The second term, $w_f\sum_{k\in \mathcal{K}} \mathbbm{1} \left\{\mathcal{F}_k=\emptyset\right\}$, is the penalty when the workload distribution subproblems are infeasible. Once the action (i.e., resource allocation decision) is determined, the workload distribution decision is made by solving the workload distribution subproblems in the inner layer. A poor resource allocation decision renders the infeasibility of the workload distribution subproblems, since insufficient network resources trigger the violation of queue stability constraints. To characterize the system performance in such a case,  a penalty is incurred to discourage poor resource allocation decisions. }
	
%	where $a_2^t$ represents the workload scheduling decision.	

%	\item State transition matrix: Upon a state $s^t$, the SDN controller takes an action $a^t$ and obtains reward $r^t$ from the environment, and then the state evolves into a new state $s^{t+1}$. The state transition probability is denoted by $\mathrm{Pr}\left( s^{t+1}|s^t, a^t\right)$, which evolves according to
%	\begin{equation}
%	\begin{split}
%	&\mathrm{Pr}\left( s^{t+1}|s^t, a^t\right)= \Pi_{m\in \mathcal{M}}\mathrm{Pr}\left( \rho_m^{t+1}|\rho_m^t\right)\cdot \\
%	&\Pi_{n\in \mathcal{N},k\in \mathcal{K}}\mathrm{Pr}\left( S_{n,k}^{t+1}|S_{n,k}^t\right)\cdot \Pi_{n\in \mathcal{N},k\in \mathcal{K}}\mathrm{Pr}\left( C_{n,k}^{t+1}|C_{n,k}^t\right).
%	\end{split}
%	\end{equation}
%	The first term is governed by vehicle density evolution, and the rest two terms are controlled by the conducted resource allocation decision.
\end{itemize}

%	which includes operation cost $U_o(t)$, slice reconfiguration cost $U_r(t)$ and QoS violation penalty $U_v(t)$. The definitions of the operation cost and slice reconfiguration cost are consistent with objective function in the stochastic optimization problem $\mathbf{P}_1$. Here, the QoS constraint violation cost is defined as a cost once slice QoS constraint is violated with improper RAN slicing decisions. The design of QoS constraint violation cost is applied to guarantee the QoS of each slice, which is given a	

%	 The first term in \eqref{equ:QoS constraint violation cost} represents the delay constraint violation cost for the delay-sensitive service, and the second term represents the queue stability constraint violation cost for the delay-tolerant service. 

%	Based on these three costs, the instant reward at slicing window $t$ is given by 

%\end{itemize}

%\vspace{-0.5 cm}
In the MDP setting, a resource allocation \emph{policy} specifies how the SDN controller allocates radio spectrum and computing resources according to the current network information in each slicing window. Let $\Pi$ denote the set of all the possible resource allocation policies. Our goal is to find the optimal resource allocation policy $\pi^\star \in \Pi$ that maximizes cumulative discounted reward within an infinite horizon of $T$ slicing windows, i.e.,
%\begin{equation}
%\max_{\pi} \lim_{T\rightarrow \infty}\mathbb{E}\left[\sum_{t=0}^{T}(\gamma)^t  r^t \left(s^t, a^t\right) |\pi \right]
%\end{equation}
\begin{subequations}
	\begin{align}
{\mathbf{P}_2':} & \max_{\pi\in \Pi} \space 
&&\mathbb{E}\left[\lim_{T\rightarrow \infty}\sum_{t=1}^{T}(\gamma)^t  r^t \left(s^t, a^t\right) |\pi \right]\nonumber\\
& \text{s.t.}
&& \eqref{equ:P0constraint_4}, \text{and } \eqref{equ:P0constraint_5}, %\sum_{k\in \mathcal{K}}S_{n,k}^t\leq S_{n}^{\text{max}}, \forall n\in \mathcal{N} \label{equ:P2'constraint_1}\\
%&&& \sum_{k\in \mathcal{K}} C_{n,k}^t\leq C_{n}^{\text{max}}, \forall n\in \mathcal{N},\label{equ:P2'constraint_2}
	\end{align}
\end{subequations}
%\begin{subequations}\label{Problem 1}
%	\begin{align}
%		& \underset{\pi\in \Pi}{\text{max}}
%	& & %e^{-(1-P_{F}^T) \frac{N}{N_f}} (1-P_{F}^T) \frac{N}{N_f} \\
%%	& \text{s.t.}
%	%& & M\in \mathbb{Z^+}\\
%	%& & \\
%	%& &&\\
%%	& &\eqref{equ: action_space1} - \eqref{equ: action_space4}
%	\end{align}
%\end{subequations} 
where $\gamma\in (0,1)$ denotes the discount factor~\cite{sutton2018reinforcement}, and $(\gamma)^t$ denotes the $t$-th power of $\gamma$.  {The lacking of future information on the time-varying vehicle traffic density in vehicular networks results in the unknown state transition probability.} In addition to the unknown state transition probability, continuous state space further makes problem $\mathbf{P}_2'$ unsolvable via conventional model-based algorithms, such as value iteration and policy iteration approaches~\cite{sutton2018reinforcement}. Hence, a model-free RL algorithm which does not require state transition probability, can be applied to obtain an optimal resource allocation policy. The  algorithm design is detailed in Subsection~\ref{sec:algorithm}. {Note that the objective function in problem $\mathbf{P}_2$ with the cumulative undiscounted reward can be well approximated by the objective function in problem $\mathbf{P}'_2$  with the cumulative discounted reward, when the discount factor approaches 1~\cite{chen2017wireless,he2020edge}. }
%Considering the constraints, a plausible solution is to modify RL algorithms via projecting output actions to feasible action space. %However, this RL solution does not take the inherent property into consideration, which results in performance loss. 

%The RAN slicing problem has an underlying structural property, i.e., the workload scheduling decision (action) and resource allocation decision is correlated. From \eqref{equ: action_space1}, the minimum amount of allocated resource depends on the workload scheduling decision, which implies that the resource allocation decision should be determined after workload scheduling decision. In the following, we incorporate the structural property in our RL algorithm design to enhance the performance. 

%Hence,  the optimal workload scheduling can be obtained via an optimization subroutine.

%The workload association solution is suitable for both delay-tolerant and delay-sensitive services. If the problem cannot be solved which indicates the violation of constraints, a penalty will be incurred. 

%------------------------------------------------------------------------------------------------------------------------------------------------------------------------------------------------------------------------------------------------------------------------------------------------------------------------------------------------------------------------------------------------

%regret performance
%
%show the impact of context
%
%variables: budget capacity, network density, user density, 
\begin{figure}[t]
	%	\vspace{-0.3cm}
	\centering
	\renewcommand{\figurename}{Fig.}
	\includegraphics[width=0.5 \textwidth]{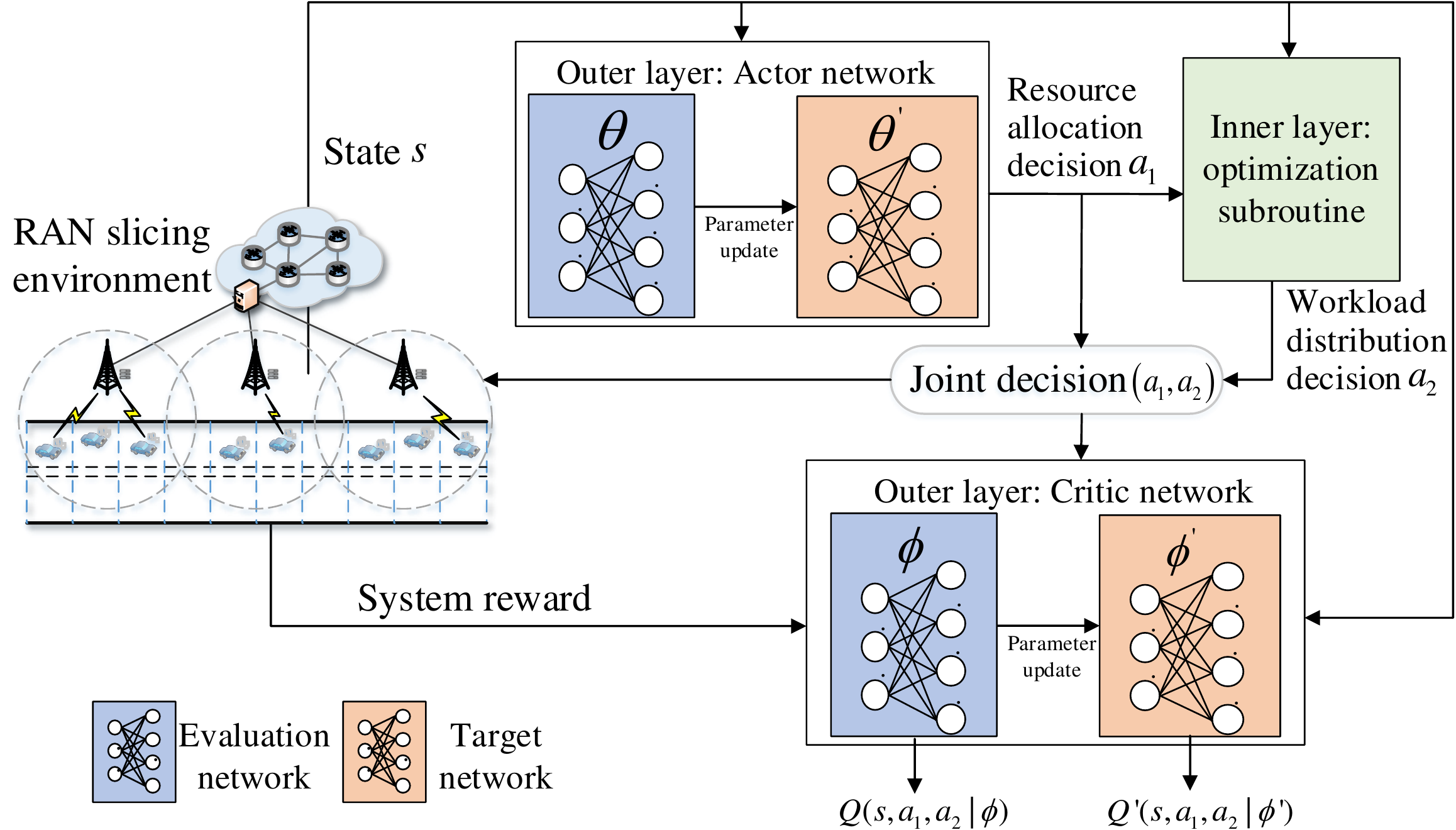}
	%	\vspace{-0.3cm}
	\caption{An illustration of the RAWS architecture. The actor network in the outer layer generates the resource allocation decision according to state~$s$, and the optimization subroutine in the inner layer generates the workload distribution decision  $a_2$ according to state $s$ and resource allocation decision~$a_1$. }
		\vspace{-0.3cm}
	\label{Fig:architecture}
\end{figure}

%------------------------------------------------------------------------------------------------------------------------------------------------

\subsection{RAWS Algorithm}\label{sec:algorithm}
By leveraging the properties of two subproblems, we propose a two-layer constrained RL algorithm to solve  problem~$\mathbf{P}_0$, named RAWS, which is extended from the DDPG algorithm~\cite{lillicrap2015continuous}. The two-layer architecture of RAWS is illustrated in Fig.~\ref{Fig:architecture}. The \emph{core idea} is that the outer layer and the inner layer make resource allocation and workload distribution decisions, respectively. Specifically, the actor network in the outer layer generates the resource allocation decision based on the current state, denoted by $a_1 $, while the optimization subroutine in the inner layer generates the workload distribution decision based on the current state and the actor network's action, denoted by $a_2$. In this way, a joint decision $(a_1, a_2)$ is obtained. The proposed decision making procedure complies with the properties of two subproblems. Then, a critic network evaluates the policy by estimating Q-values and guides the update of the actor network.   %A smaller estimated Q-value among two critic networks is used to update the critic network, which can avoid the Q-value overestimation issue \cite{fujimoto2018addressing}. Note that each actor/critic network has an evaluation network and a target network for training stability.
%Unlike existing RL approachs adopt a single actor for decision making, TARS adopts a twin-actor architecture for decision making. 

%is designed to satisfy the capacity constraints of spectrum and computing resources in \eqref{equ:P1constraint_1} and \eqref{equ:P1constraint_2},   %$\sum_{k\in \mathcal{K}}S_{n,k}^t\leq S_{n}^{\text{max}}, \sum_{k\in \mathcal{K}} C_{n,k}^t\leq C_{n}^{\text{max}}, \forall n\in \mathcal{N}$. 
Within the RAWS, we also design a softmax-based actor network to generate resource allocation decisions satisfying the resource capacity constraints, whose structure is shown in Fig.~\ref{Fig:actor_network}. The \emph{basic idea} is that the output layer of the actor network adopts the softmax function to activate a $(K+1)$-dimension vector, while only outputting its $K$ elements as the decision that satisfying the resource capacity constraint. The softmax activation function can obtain a normalized output vector, i.e., $\sum_{i=1}^{K+1}x_i=1$, where $\left\{x_i\right\}_{i=1,2,...,K+1}$ is the output vector. Hence, the sum of its $K$ elements satisfies  inequality constraint $\sum_{i=1}^{K}x_i\leq 1$, which can be scaled up to resource capacity constraints, e.g., $\sum_{k=1}^{K}S_{n,k}^t\leq S_{n}^{\text{max}}$. %Applying the method for all constraints, generated decisions satisfy the resource capacity constraints.

\begin{figure}[t]
	%		\vspace{-0.5cm}
	\centering
	\renewcommand{\figurename}{Fig.}
	\includegraphics[width=0.35\textwidth]{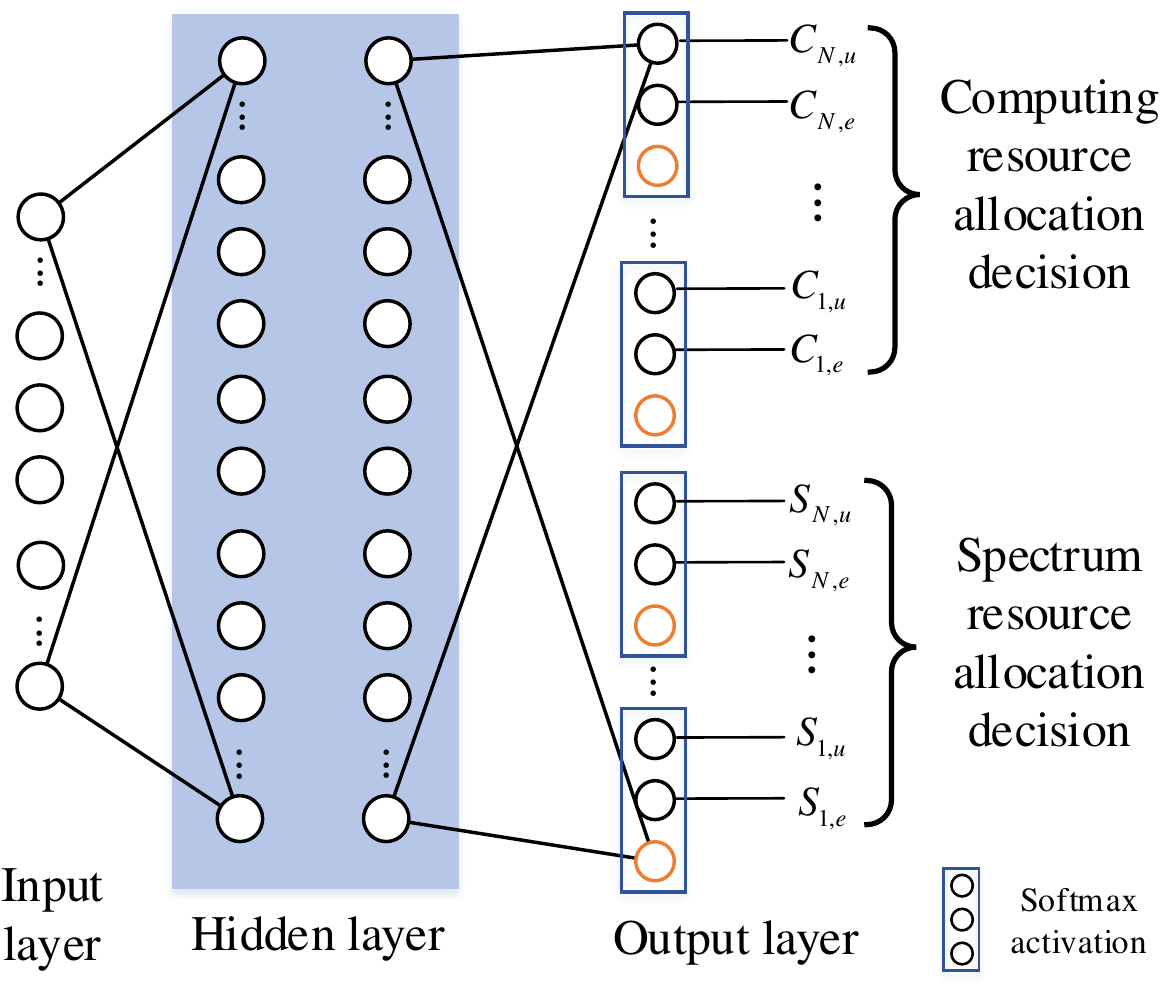}
	%	\vspace{-0.3cm}
	\caption{{Structure of the softmax-based actor network. Each tuple of three neurons in the output layer is activated by a softmax activation function, and the outputs of two of them are considered as the decisions.}}
	\label{Fig:actor_network}
		\vspace{-0.3cm}
\end{figure}

In the following, we elaborate the RAWS in detail, which is given in Algorithm 1. In the initialization phase, actor evaluation network $\mu(s|\theta)$, and critic evaluation network $Q(s, a_1, a_2|\phi)$ are initialized, where $\theta$ and $\phi$ denote parameters of evaluation networks. The corresponding target actor and critic networks are denoted by  $\mu'(s|\theta')$ and $Q'(s, a_1, a_2|\phi')$, respectively, whose parameters are $\theta'$ and $\phi'$. The environment is reset with  initialized state $s^0$. The RAWS operates in a discrete manner, which consists of two phases in each slicing window.

1. \emph{Experience generation (Lines 5-12)}: The agent's experience is represented by a  multi-dimensional tuple of the state transition and feedback reward, i.e., $\{s^t, a_1^t, a_2^t, r^t, s^{t+1}\}$, which is obtained via the following steps. The first step is to make the decisions.  In slicing window $t$, the actor network makes resource allocation decision $a_1^t=\{\mathbf{S}^t, \mathbf{C}^t\}$ with respect to current state~$s^t$, i.e., 
\begin{equation}\label{equ:joint_action1}
a_1^t=\mu(s^t|\theta)+\epsilon,
\end{equation}
where $\epsilon=\mathcal{N}(0, \sigma^2)$ is added Gaussian noise for policy exploration. Then, the workload distribution decision  $a_2^t=\mathbf{B}^t$ is made with respect to the current state and the resource allocation decision, by solving optimization problems $\left\{\mathbf{P}_{1,k}\right\}_{k\in\mathcal{K}}$ when they are feasible.\footnote{The feasibility of optimization problems can be obtained via the status of optimization solvers.} The second step is to store the experience. Once the joint decision is taken, reward $r^t$ is obtained from the environment, and the environment evolves to  new state $s^{t+1}$. The corresponding transition tuple $\{s^t, a_1^t, a_2^t, r^t, s^{t+1}\}$ is stored in experience replay buffer $B$ for training.
%\begin{equation}\label{equ:joint_action2}
%a_2^t=\mu(s^t,a_1^t|\theta_2)+\epsilon.
%\end{equation}

\begin{algorithm}[t]	\label{algorithm:DARS}
	\SetAlgoLined
	\LinesNumbered   
	\SetKwInOut{Input}{Input}
	\Input{ Vehicle density of all the zones $\{\rho_m^t\}_{m\in \mathcal{M}, t\in \mathcal{T}}$;}
	\SetKwInOut{Output}{Output}
	\Output{Resource allocation and workload distribution decisions $\{\mathbf{S}^t$, $\mathbf{C}^t$, $\mathbf{B}^t \}_{t\in \mathcal{T}}$;}
	%	\KwResult{Write here the result }
	\textbf{Initialization}: Initialize critic and actor networks and the experience replay buffer\;
	\For{episode =1: $N_e$}{
		Reset environment and obtain initial state $s^0$\;
		\For{slicing window $t$=1: $T$}{
			{$\rhd$ Experience generation}\\
			Obtain resource allocation decision $a_1^t$ by \eqref{equ:joint_action1}\;
			\If{$\left\{\mathbf{P}_{1,k}\right\}_{k\in\mathcal{K}}$ are feasible}{
				Obtain workload distribution decision $a_2^t$ by solving  optimization problems $\left\{\mathbf{P}_{1,k}\right\}_{k\in\mathcal{K}}$\;
				%				current section becomes this one\;
			}
			\LinesNumbered   
			Execute joint decision $a^t=\left(a_1^t, a_2^t \right)$\;
			Observe reward $r^t$ and new state $s^{t+1}$\;
			Store transition $\{s^t, a_1^t, a_2^t, r^t, s^{t+1}\}$ in the experience replay buffer\;
			{$\rhd$ Neural network update}\\
			Sample a random minibatch of transitions from the experience replay buffer\;
			%			Set $y_i=r_i+\gamma Q'(s_{i+1}, \mu'(s_{i+1}|\theta^{\mu'})|\theta^{Q'})$\;
			Update the critic evaluation network via minimzing the loss function in \eqref{equ:loss_function}\; %$L=\frac{1}{N_m}\sum_{i=1}^{N_m}\left(y_i-Q(s_i,a_i|\theta^{Q})\right)^2$\;
			Update the actor evaluation network via the policy gradient in \eqref{equ: actor1_update}\;
			Update target networks by \eqref{equ:target_update}\;
		}
	}
	\caption{Resource Allocation and Workload Distribution (RAWS) algorithm.}
\end{algorithm}

2. \emph{Neural network update (Lines 13-17)}: In this phase, neural networks are updated offline based on the acquired experience, which is  similar to that in DDPG. At the beginning, a mini-batch of $N_m$ transitions are randomly chosen from the experience replay buffer as training data to update the parameters of neural networks. The detailed update procedure is given as follows. 

Firstly, the critic evaluation network is updated by minimizing the loss function $L(\phi)$, which is defined as
\begin{equation}\label{equ:loss_function}
L(\phi)=\frac{1}{N_m}\sum_{n=1}^{N_m}\left(y^n-Q\left(s^n, a_1^n, a_2^n|\phi\right) \right)^2.
\end{equation}
In \eqref{equ:loss_function}, $y^n=r^n+\gamma Q'\left(  s^n, a_1^n, a_2^n|\phi' \right)$ is the update target obtained from the target networks.
%&=r^n+\gamma \min_{i\in \{1,2\}}Q'\left(  s^n, \mu'\left(s^n|\theta_1'\right)+\hat{\epsilon},  \mu'\left(s^n, \mu'\left(s^n|\theta_1'\right)|\theta_2' \right)+\hat{\epsilon} |\phi_i'\right)

Secondly, guided by the critic network, the actor evaluation network is updated via the following policy gradient (i.e., the gradient of policy's performance):
\begin{equation}\label{equ: actor1_update}
%\begin{split}
\nabla_{\theta}J \approx \frac{1}{N_m}\sum_{n=1}^{N_m}\nabla_{a_1}{Q\left(s, a_1, a_2|\phi \right)|}_{\begin{subarray} {l}{s=s^n}\\ { a_1=\mu\left(s^n|\theta\right)} \end{subarray}} \nabla_{\theta}\mu\left(s^n|\theta\right).
\end{equation}
%\begin{equation}\label{equ: actor2_update}
%\begin{split}
%\nabla_{\theta_2}J \approx &\frac{1}{N_m}\sum_{n=1}^{N_m}\nabla_{a_2}{Q\left(s, a_1, a_2|\phi_1\right)|}_{\begin{subarray} {l}{s=s^n}\\ { a_2=\mu\left(s^n, a_1^n|\theta_2\right)} \end{subarray}}  \nabla_{\theta_2}\mu\left(s^n, a_1^n|\theta_2\right).
%\end{split}
%\end{equation}

Thirdly, target networks are updated by slowly tracking the evaluation networks, i.e.,
\begin{equation}\label{equ:target_update}
\begin{split}
&{\theta'}=\tau {\theta}+(1-\tau) {\theta'},\\
&{\phi'}=\tau{\phi}+(1-\tau){\phi'},
\end{split}
\end{equation}
where $\tau \in (0, 1)$ denotes the update ratio of target networks.

	\begin{remark}
		Two unique designs are incorporated in the proposed RAWS algorithm: (1) the RAWS adopts a two-layer architecture, in which an actor network and an optimization subroutine are adopted to make joint decisions satisfying coupled constraints; and (2) the RAWS adopts a softmax-based actor network to generate resource allocation decisions satisfying resource capacity constraints. As such, the RAWS can reduce the long-term system cost while satisfying constraints.
\end{remark}

%\subsection{Actor Network Architecture}
%%To guarantee the capacity constraints a 
%

%------------------------------------------------------------------------------------------------------------------------------------------------------------------------------------------------------------------------------------------------------------------------------------------------------------
\section{Simulation Results}\label{sec:simulations}
\begin{figure}[t]
	\centering
	\renewcommand{\figurename}{Fig.}
	\includegraphics[width=0.5\textwidth]{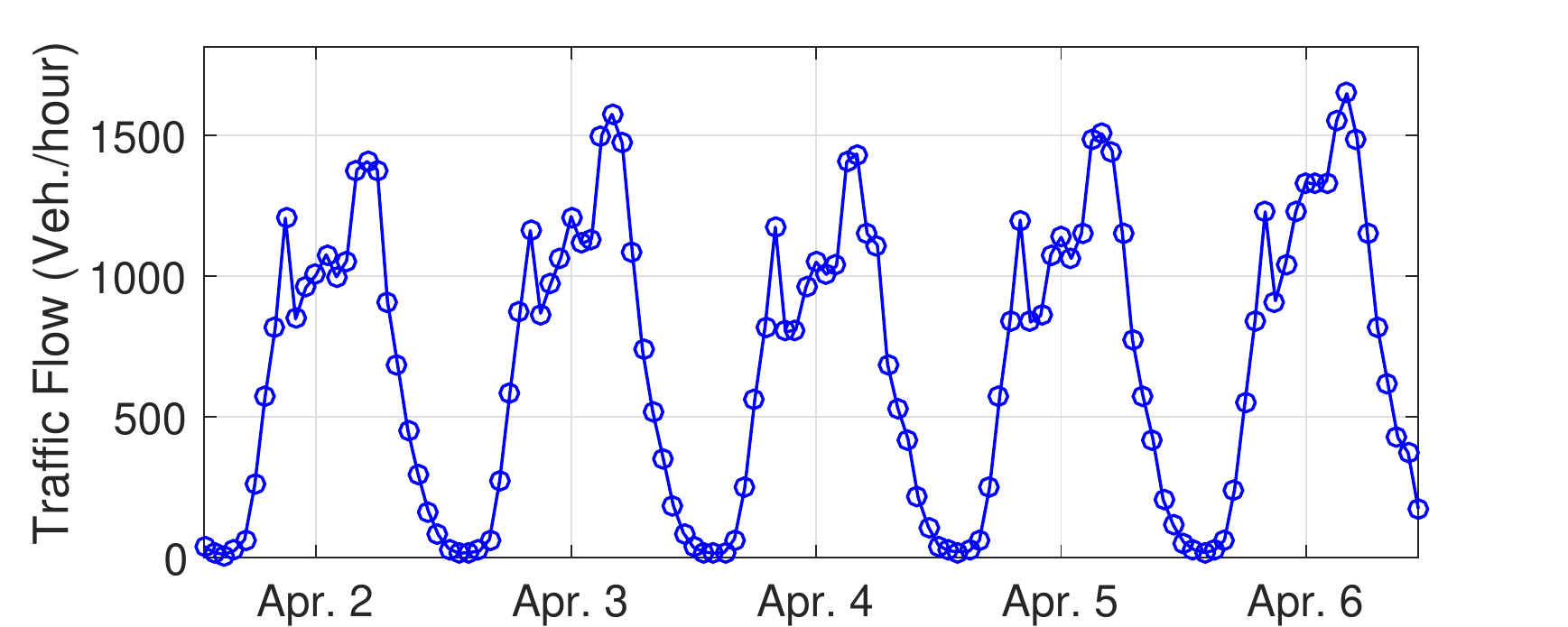}
	\caption{Highway vehicle traffic flow trace.}
	\label{Fig:vehicle_flow_trace}
		\vspace{-0.3 cm}
\end{figure}
\subsection{Simulation Setup}

%\begin{figure}[t]
%	\centering
%	\renewcommand{\figurename}{Fig.}
%	\includegraphics[width=0.45\textwidth]{figs/convergence_performance_comparsion.eps}
%	\caption{Convergence performance comparison different algorithms.}
%	\label{Fig:convergence_performance_comparsion}
%	%	\label{Fig:training_perforamnce}
%\end{figure}	
%\begin{figure}[t]
%	\centering
%	\renewcommand{\figurename}{Fig.}
%	\includegraphics[width=0.45\textwidth]{figs/train_average_delay_1.eps}
%	\caption{Delay performance of RAWS in the training stage.}
%	\label{Fig:delay_violation}
%	%	\label{Fig:training_perforamnce}
%\end{figure}	

%\begin{figure}[t]
%	\centering
%	\renewcommand{\figurename}{Fig.}
%	\includegraphics[width=0.3\textwidth]{figs/vehicle_flow.eps}
%	\caption{Real-world vehicle flow trace for one week in Highway I5-N.}
%	\label{Fig:vehicle_flow_trace}
%\end{figure}
%\subsection{Simulation Setup}

The performance of the proposed RAWS algorithm is evaluated by simulations based on two real-world vehicle traffic flow traces: (1) the highway trace, which is collected by Alberta Transportation.\footnote{Alberta Transportation:  http://www.transportation.alberta.ca/mapping/.} The dataset records the hourly vehicle traffic flow on Highway 2A in Canada, over a period of three weeks from Apr. 2, 2020 to Apr. 22, 2020, as shown in Fig.~\ref{Fig:vehicle_flow_trace}.  The vehicle traffic flow information is converted to vehicle density based on the Lighthill-Whitham-Richards (LWR) model~\cite{kachroo1999feedback}; and (2) the urban trace, which is collected by Didi Chuxing GAIA Initiative,\footnote{{Didi Chuxing: https://gaia.didichuxing.com.}} over a period of three weeks from Oct. 10, 2016 to Oct. 30, 2016. The dataset collects the trace of taxis that are equipped with GPS devices in urban areas of Xi'an, China. The vehicle's location information is collected every 2-4 seconds, which can be converted to the average vehicle density information.  

\begin{table}[t]
\footnotesize
	\centering
	
	\caption{Simulation parameters.}
	\label{Simulation parameters}
	\vspace{-0.2cm}
	\begin{tabular}{ c c | c  c }
		\hline\hline
		\textbf{Parameter} &\textbf{Value} & \textbf{Parameter} & \textbf{Value} \\ \hline
		$W$ &  10 MHz  &$F$ & 10 GHz\\
		$\lambda_e$&$\{0.5, 1\}$ req/s &$\lambda_u$ &$\{0.8-1.2\}$ req/s \\
		$D^{th}$ & 100 \emph{ms}    &$D_H$ & 200 \emph{ms}\\
		$\rho_{max}$& 120 veh/km &$v_f$ &120 km/h\\ 
		%		$N_l$ & 8 & $L$ & 200 m\\
		$\xi_u, \xi_b$ & (0.6, 2) Mbit& $\eta_u, \eta_b$ & (6, 2) $\times10^8$  cycles \\
		$N_e$  & 1000 episodes &	$w_{o,s},w_{o,c}$ & 1, 1  \\
		$w_{r,s}$, $w_{o,c}$ & 5, 5 &$w_{v},w_{r}, w_f$ & 200, 25, 200 \\
		$S_n^{\text{max}}$ & 18     & $C_n^{\text{max}} $ & 18 \\ \hline
	\end{tabular}
	\vspace{-0.3 cm}
\end{table}

\begin{table}[t]
\footnotesize
	\centering
	\caption{Parameters of RAWS.}
	\label{Neural network parameters}
	\vspace{-0.2cm}
	\begin{tabular}{ c c|c c }
		\hline \hline
		\textbf{Parameters} & \textbf{Value} & \textbf{Parameters} & \textbf{Value}\\ \hline
		Actor learning rate & $10^{-4}$& Critic learning rate & $10^{-3}$ \\
		Actor  hidden units& $(128, 64)$& Critic hidden units& $(128, 64)$  \\ 
		Actor output act. & Softmax       	&Hidden layer act. & ReLU\\
		Optimizer& Adam & 	Replay buffer size& 100,000\\
		Policy noise  $(\sigma)$ &$0.02$  &Target update ratio & $0.005$\\
		Discount factor& $0.75$  &		 Minibatch size& $64$\\ \hline
	\end{tabular}
	\vspace{-0.3cm}
\end{table}

For both traces, the duration of a slicing window is set to one hour. We consider a road segment with a length of 5\;km, which is divided into~25 zones with equal length. Five BSs are deployed with a separation distance of 1\;km, and  the coverage radius of a BS is set to 800\;m for service continuity, in order to cover the road segment. The channel path loss model for vehicular networks is given by $\text{PL(dB)} = 128.1 + 37.6\log_{10}\left(d\right)$, where $d$ (in km) denotes the transmission distance between the vehicle and the BS~\cite{liang2019spectrum}. The transmission power is  set to 0.5\;W. The two types of services are considered in the simulation. For the delay sensitive service, we consider the cooperative sensing service for autonomous driving, in which vehicles upload on-board video to BSs. The corresponding task data size is set to 0.6\;Mbit, which is the data size of a one-second quarter common intermediate format (QCIF) video with 176$\times$144 video resolution, 24.8\;K pixels per frame and 25 frames per second~\cite{sun2017emm}. The average computation intensity is 1,000\;cycles/bit. After a task is processed, the  result is fed back to vehicles. Hence, the maximum tolerable delay of the cooperative sensing service is set to 100\;\emph{ms} for safety consideration~\cite{lin2018architectural}. For the delay-tolerant service, we consider the HD map creation service. The task data size and average computation intensity are set to 2\;Mbit and 100\;cycles/bit, respectively. The task arrival rate is set to 1~request per second unless otherwise specified. {{The exemplary unit value setting for the operation cost, the slice reconfiguration cost, the delay constraint violation cost, and the system revenue is $\{1, 5, 200, 25\}$ based on the mentioned principle in Subsection~III-D.} {{Note that these values can be tuned based on the preference of the network operator.}}}  Important system parameters are listed in Table~\ref{Simulation parameters}. Parameters of the RAWS are listed in Table~\ref{Neural network parameters}.

%\begin{figure}[t]
%	\centering
%	\renewcommand{\figurename}{Fig.}
% 	\includegraphics[width=0.45\textwidth]{figs/layout.pdf}
%	\caption{Layout of BS deployment in the considered highway segment.}
%	\label{Fig:layout}
%\end{figure}

We compare the RAWS with the following  benchmarks:
\begin{enumerate}
	\item {\textbf{The DDPG~\cite{lillicrap2015continuous} with decision shaping}: The decision shaping is performed to obtain feasible decisions. The resource allocation decision is shaped to satisfy the coupled constraints, while keeping the workload distribution decision unchanged. Taking the constraint in \eqref{equ: constraint_U_1} for example, i.e., $S_{n,u}\geq 
		\hat{\chi}_{n,s, u}+ \kappa_{s,u}\sum_{m\in \mathcal{M}_o}\psi_{m,n,u}  \beta_{m,u}$, radio spectrum allocation decision $S_{n,u}$ is complemented once it violates the bound on the right-hand side;}  %This algorithm makes workload scheduling and resource allocation decisions, and then shapes the resource allocation decision to satisfy the constraints in $\mathbf{P}_0$ while keeping the workload scheduling decision;
	\item  \textbf{The TD3~\cite{fujimoto2018addressing} with decision shaping}: Similar to the DDPG, the TD3 algorithm shapes decisions according to coupled constraints;
	\item \textbf{The RAWS without workload distribution (RAWS w.o.)}: This algorithm is a simplified version of our proposed algorithm, in which the workload of an overlapped zone is equally distributed to  two nearby BSs;
	\item  \textbf{Random}: This algorithm randomly selects feasible decisions in the decision space. 
\end{enumerate}
Note that the softmax-based actor network is applied in all the learning-based algorithms to satisfy the resource capacity constraints.
\begin{figure*}[t]
	%	\vspace{-0.2cm}
	\centering
	\renewcommand{\figurename}{Fig.}	
	\begin{subfigure}[Overall system cost]{
			\label{Fig:overal_cost_trainiing}
			\includegraphics[width=0.4\textwidth]{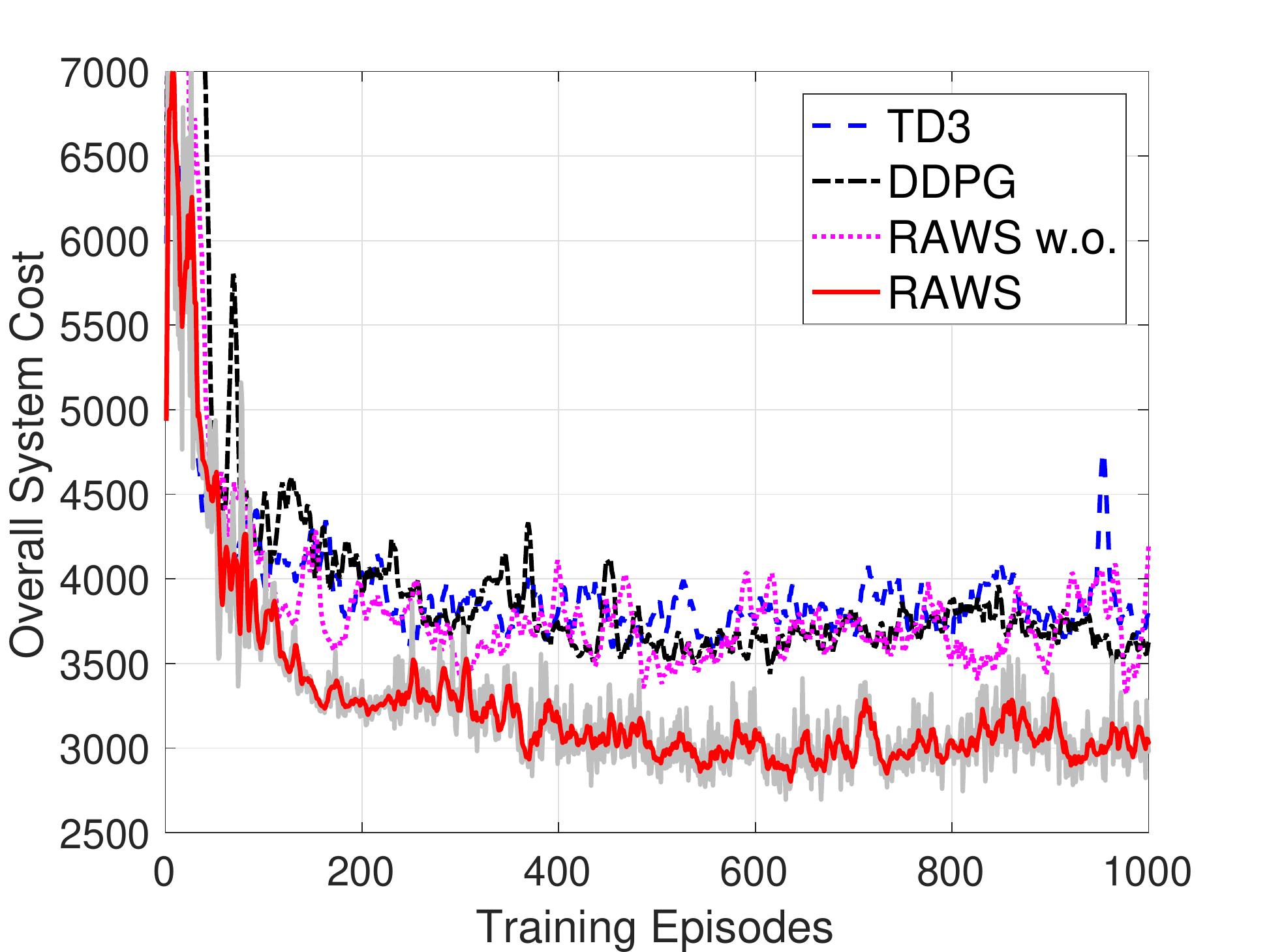}}
	\end{subfigure}
	~
	\begin{subfigure}[Delay-sensitive service performance ]{
			\label{Fig:delay_violation_training}
			\includegraphics[width=0.4\textwidth]{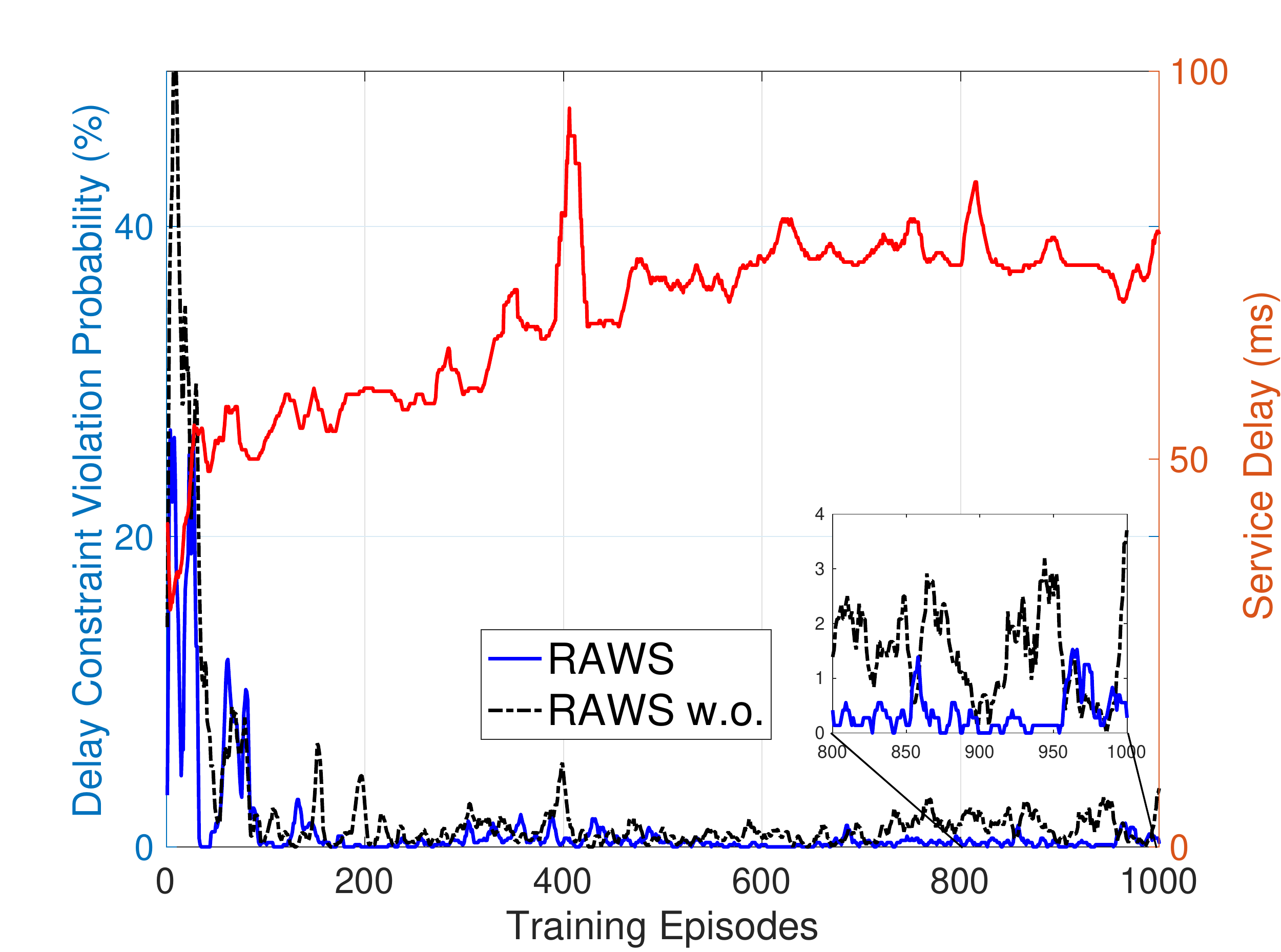}}
	\end{subfigure}
	\caption{Convergence performance of all the learning-based algorithms for $\lambda_u=1$. }
	\label{Fig:convergence_performance_comparsion}
	\vspace{-0.3cm}
\end{figure*}

\subsection{Performance Evaluation Over  Highway Vehicle Traffic Flow Trace}

%Simulations results are presented 
%Figure \ref{Fig:transmission_performance} compares the proposed algorithm with benchmark scheme from different perspectives. First of all, 
{The first week's data is used to train the RAWS, and then the rest of two weeks' data is used to evaluate its actual performance in terms of system cost, service delay, and QoS constraint violation probability. The measured performance is averaged over two weeks. 
\subsubsection{Convergence performance}
The convergence performance of the RAWS is shown in Fig.~\ref{Fig:convergence_performance_comparsion}.  Raw simulation points (e.g., grey curve) are processed by a five-point moving average to highlight the convergence trend (e.g., red curve). {The number of training episodes represents the number of times that a algorithm has been trained.} Firstly, the overall system cost  with respect to training episodes is shown in Fig.~\ref{Fig:overal_cost_trainiing}.  We observe a ``bounded" overall system cost behavior of all the learning-based algorithms, which means that all of them have converged. More importantly, the RAWS achieves a lower overall system cost, as compared with all the benchmarks. The reason is that the RAWS is able to satisfy constraints without shaping decisions, while the DDPG and TD3 benchmarks do, thereby degrading the system performance.  Secondly, the delay-sensitive service performance with respect to training episodes is shown in Fig.~\ref{Fig:delay_violation_training}. On the one hand, the delay constraint violation probability quickly decreases from more than 20\% to approximately 0.6\% as the number of training episodes increases, which is lower than the benchmark. On the other hand, the red curve shows the average service delay also converges with respect to training episodes. Overall, the results indicate a good RAN slicing policy has been learned via interacting with a dynamic vehicular environment, which further verifies the convergence property of the RAWS.  }%Note that 
\begin{figure}[t]
	\centering
	\renewcommand{\figurename}{Fig.}	
	%	\begin{subfigure}[Cumulative overall cost]{
	\includegraphics[width=0.4\textwidth]{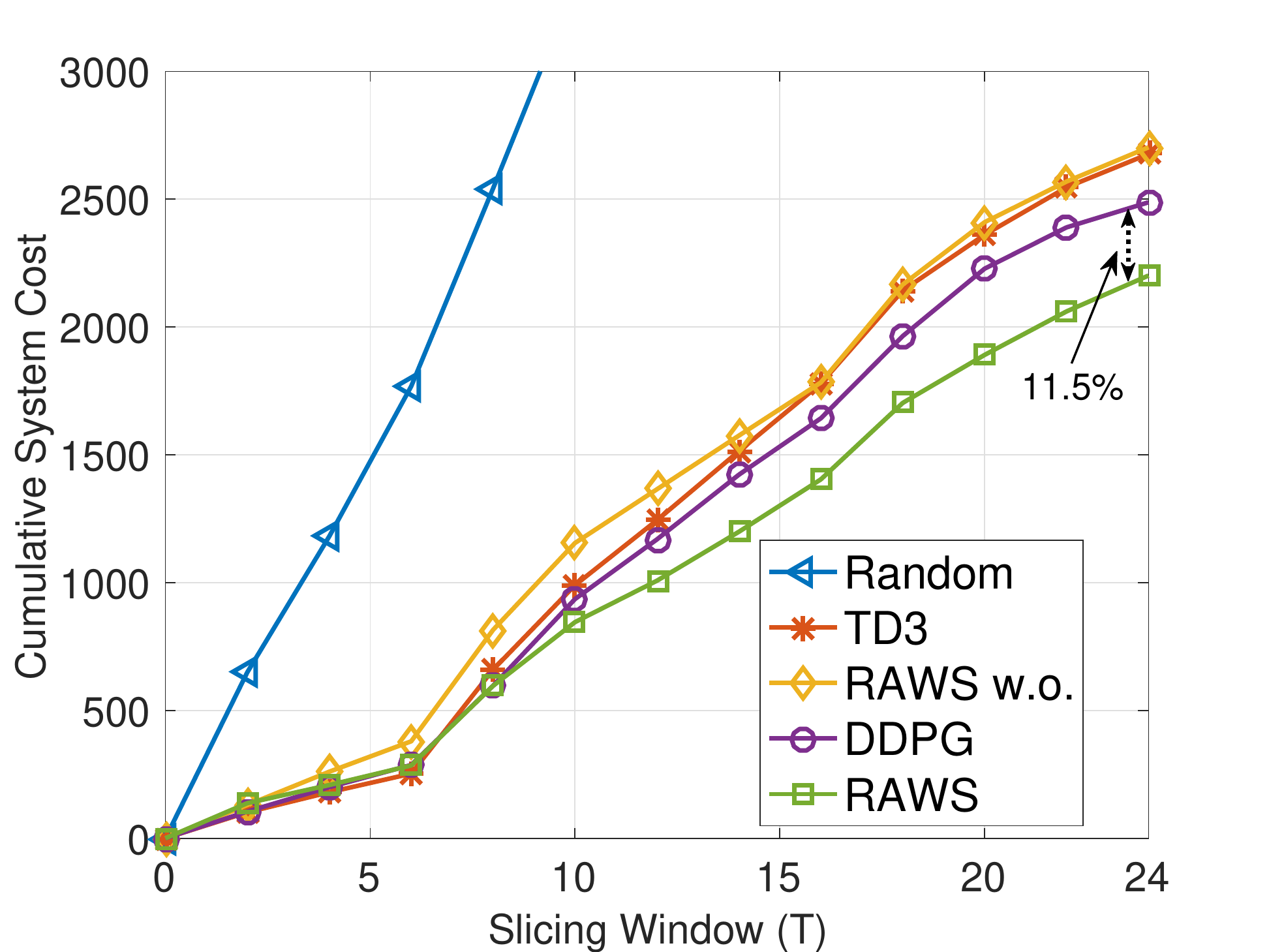}
	\caption{Cumulative system cost within one day for $\lambda_u=1$.}
	\label{Fig:aggregate_overall_cost}
	\vspace{-0.3cm}
\end{figure}

%Similar observations can found for benchmarks, i.e., benchmarks have converged to proper policies, which validates the effectiveness of performance comparison. Secondly, the convergence speed of the proposed algorithm is fastest among all the algorithms. It only takes 25 episodes to find to a relative good policy while DDPG takes around 100 episodes to find a proper policy, which indicates a faster convergence speed of the proposed algorithm. Thirdly, we can see that the proposed algorithm converges to a policy with a lower cumulative cost than benchmarks, which demonstrates good training performance of the proposed algorithm. 

%Figure \ref{Fig:delay_violation} shows the average delay performance of the delay-sensitive service. It can be seen that the average delay decreases in terms of the number of training episodes. Specifically, the average delay is lower than the maximum tolerable delay (0.1 s) after 20 episodes of training. Fig. \ref{Fig:train_delay_bound_violation_probability} presents the delay bound violation probability in the training stage. Similar to average delay performance in Fig. \ref{Fig:delay_violation}, the delay bound violation probability decreases with respect to training episodes. The violation probability decreases from 70\% at the beginning of training to less than 5\% after 100 episodes training. Note that the violation probability in the training stage is larger than that in the evaluation stage.

\subsubsection{System cost}
% (i.e., evaluation stage). As shown in Figs~\ref{Fig:aggregate_overall_cost}-\ref{Fig:transmission_performance}, we present the performance of RAWS in the evaluation stage. 
When the learning-based algorithms are well-trained, we evaluate the cumulative overall system cost $\sum_{t=1}^T U^t$ within one day, as shown in Fig.~\ref{Fig:aggregate_overall_cost}. In the simulation, one day includes 24 slicing windows for the slicing window size of one hour. As expected,  the RAWS incurs the lowest cost among all the algorithms. Specifically, the cumulative system cost incurred by the RAWS is approximately 11.5\% less than that by the best benchmark, which validates its good performance in reducing system cost. In addition, the result shows that the performance gain increases with the number of slicing windows, implying that a high performance gain can be achieved for a large number of slicing windows.

As shown in Fig.~\ref{Fig:overal_cost}, the average overall system cost per day of the different algorithms is compared with respect to the  task arrival rate of the delay-sensitive service. Obviously, the average system cost increases with the increase of the task arrival rate,  because more tasks consume more network resources. In addition, as compared with the benchmarks, performance gain achieved by the RAWS gradually increases with respect to the task arrival rate. The underlying reason is that the network resource utilization of  the RAWS is higher than that of the benchmarks, especially in heavy traffic scenarios (e.g., $\lambda_u=1.2$). In Fig.~\ref{Fig:operation_cost}, average operation cost component with respect to the task arrival rate is evaluated. We can see that the proposed RAWS algorithm incurs a lower operation cost. Specifically, average performance gain over different task arrival rates  is about 13\% for $\lambda_e=1$, as compared with the DDPG benchmark. % the average slice configuration cost. The DARS algorithm can also effectively reduce the QoS violation cost, which implies a lower QoS constraint violation probability compared with benchmarks.  
\begin{figure*}[t]
	%	\vspace{-0.2cm}
	\centering
	\renewcommand{\figurename}{Fig.}	
	\begin{subfigure}[Overall system cost]{
			\label{Fig:overal_cost}
			\includegraphics[width=0.4\textwidth]{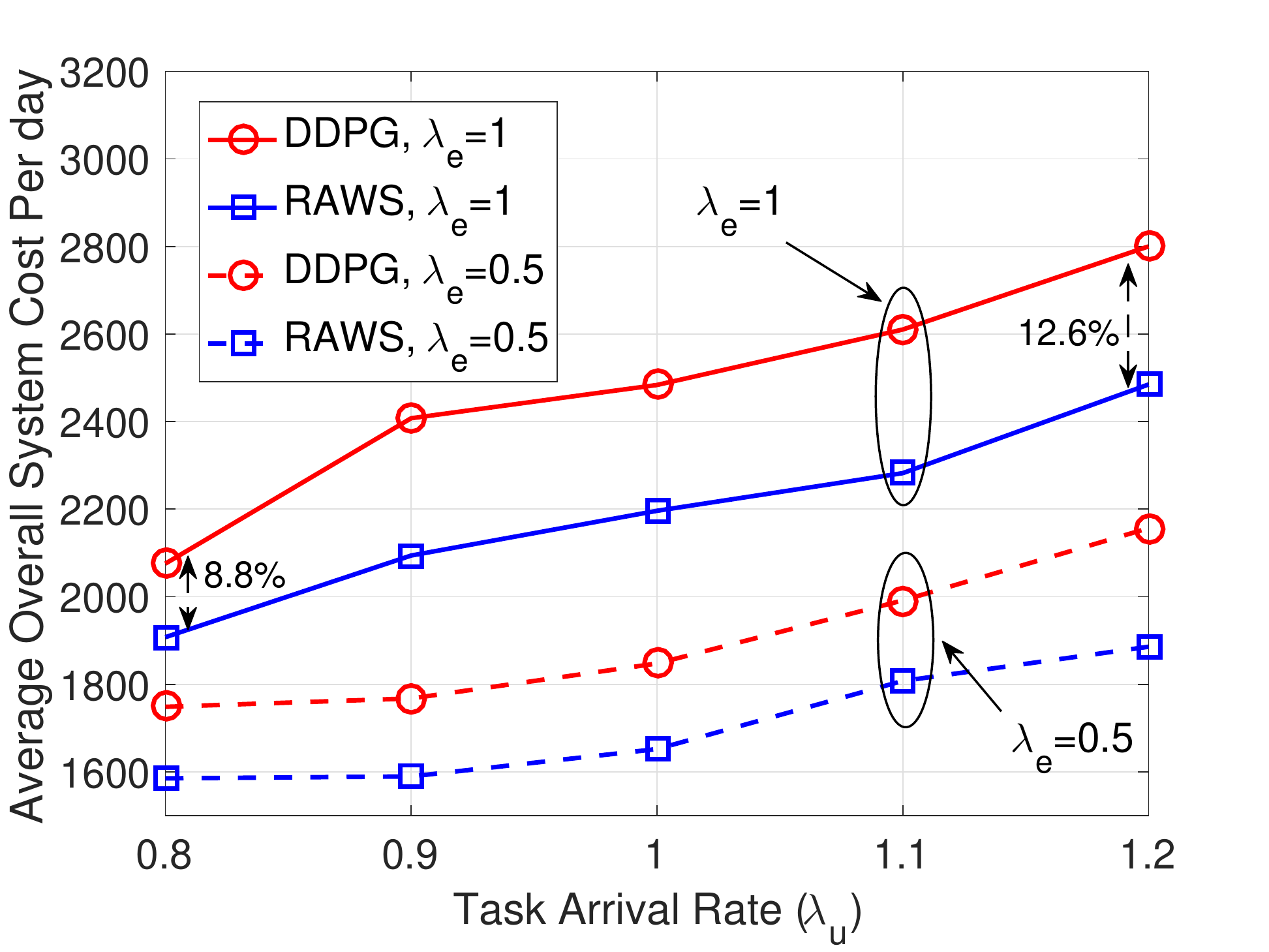}}
	\end{subfigure}
	~
	\begin{subfigure}[Operation cost]{
			\label{Fig:operation_cost}
			\includegraphics[width=0.4\textwidth]{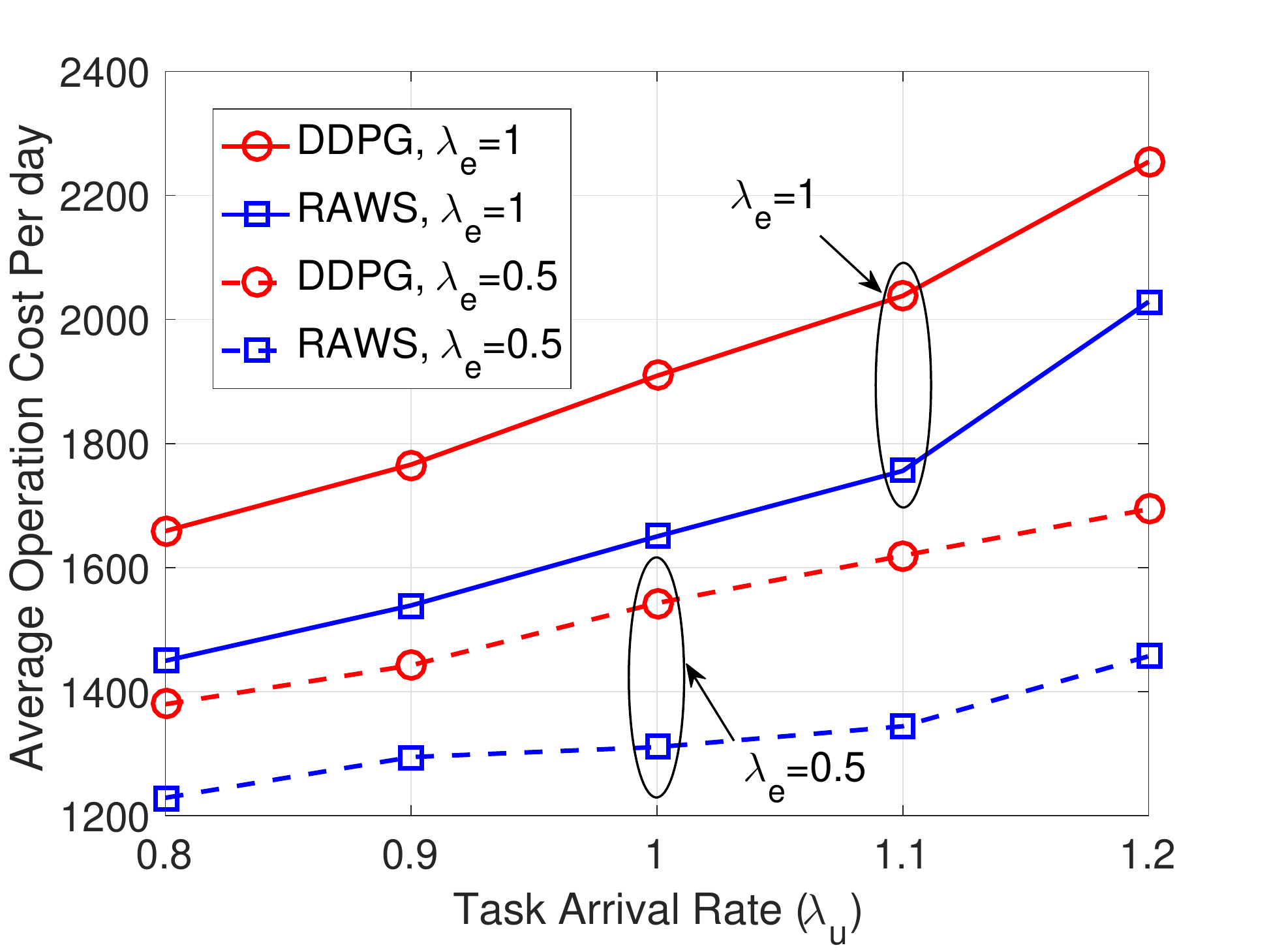}}
	\end{subfigure}
%	\vspace{-0.2cm}
		\caption{Overall system cost and operation cost comparison with respect to the task arrival rate of the delay-sensitive service. }
	\label{Fig:transmission_performance}
	\vspace{-0.3cm}
\end{figure*}

%In addition, Fig. \ref{Fig:operation_cost} compares the cumulative operation cost among three algorithms, which further indicates the DARS can effectively save network resources in the long-term. %Finally, 
\begin{figure}[t]
	\centering
	\renewcommand{\figurename}{Fig.}	
	\includegraphics[width=0.4\textwidth]{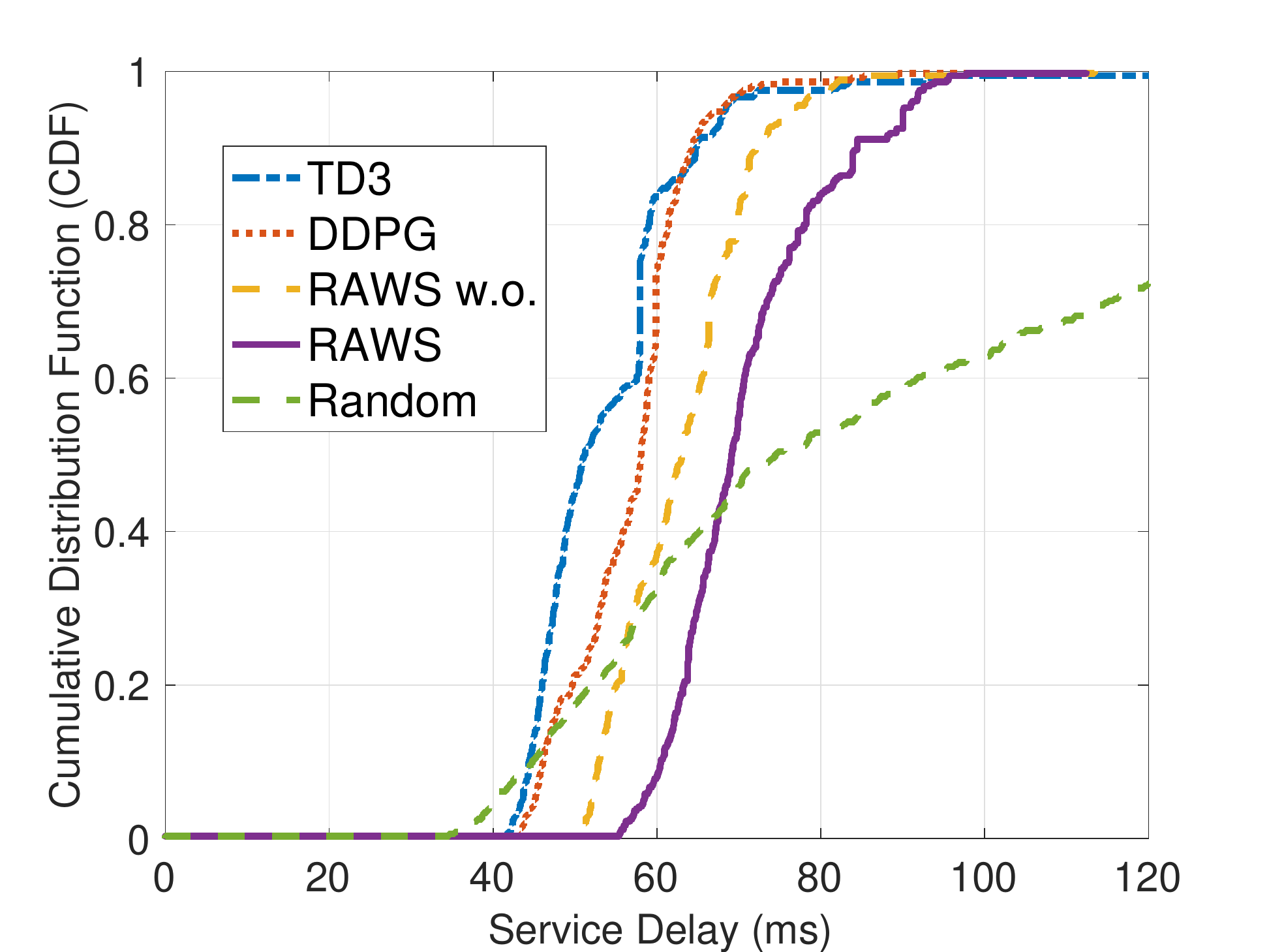}
%			\vspace{-0.3cm}
	\caption{Cumulative distribution function of service delay for $\lambda_u=1$.}
	\label{Fig:CDF}
		\vspace{-0.3cm}
\end{figure}

\subsubsection{Service delay}
As shown in Fig.~\ref{Fig:CDF}, the cumulative distribution functions of the service delay of the different algorithms are presented. Two important observations can be obtained from the simulation results. Firstly, the average service delay of the RAWS is about 70.9\;\emph{ms}, which is larger than that of the DDPG (56.8\;\emph{ms}) and TD3 (61.8\;\emph{ms}). The average service delay of the RAWS is closer to the maximum tolerable delay constraint (100\;\emph{ms}) than those of the benchmarks, as the network resource is more efficiently utilized in the  RAWS  under the constraint. Secondly, all the learning-based algorithms, including the  DDPG, TD3 and RAWS, can effectively satisfy the service delay constraint.  More importantly, the delay bound violation probability of the RAWS is very low (0.28\%), validating the RAWS can satisfy the delay constraint with a high probability. 

\begin{table}[t]
	\footnotesize
	\centering
%		\vspace{-0.3cm}
	\caption{{{QoS constraint violation probability with different task arrival rates.}}}
	\label{T:QoS_constraint}
	\vspace{-0.3cm}
	\begin{center}
		\begin{tabular}{| c | c | c | c | c |c |}
			\hline 
			{\textbf{$\lambda_u$ }}	& {\textbf{RAWS}} & {\textbf{DDPG}} & {\textbf{TD3}} & {\textbf{RAWS w.o.}}& {\textbf{Random}} \\
			\hline			
			%			{0.8} &  {0.83\%}&  { 0.28\% }&   {1.11\%}& {1.67\%}& {27.22\%}\\ \hline
			{0.9} &   {0.28\% }&  {1.11\%} & {1.11\%}& {0.56\%}& 31.67\% \\ \hline
			{1.0}&  {0.28\% }&  {0.56\%}&  {0.56\%}& {1.67\%} & 35.56\% \\ \hline
			{1.1} &  {0.28\% }&   {0.28\% }&  {0.28\%}& {2.22\%} & 38.06\%\\ \hline
			{1.2} &  {0.56\% }&  {0.56\% }& {1.11\%}& {3.89}\% &40.83\% \\ \hline
			Mean & \textbf{0.35\%} & 0.63\% & 0.76\%& 2.08\% & 36.53\%\\ \hline
		\end{tabular}
	\end{center}
	\vspace{-0.3cm}
\end{table}

\subsubsection{QoS constraint violation probability}
{As listed in Table~\ref{T:QoS_constraint}, QoS constraint violation probabilities of different algorithms are compared for  task arrival rates. A QoS constraint violation event is triggered by either exceeding the maximum tolerable delay for the delay-sensitive service or violating queue stability constraints for both services. It can be seen that the average violation probability of the RAWS over task arrival rates is 0.35\%, which is lower than those of the benchmarks.  It is interesting to note that, the RAWS without workload distribution performs even worse than the learning-based benchmarks (DDPG and TD3). The workload distribution mechanism provides more flexibility to the system. It balances spatially uneven workload among BSs by distributing vehicles' workloads from overloaded BSs to underutilized BSs, thereby reducing QoS constraint violation probability.} %In this way, the flexibility of the RAN slicing policy is enhanced. 

\begin{figure}[t]
	\centering
	\renewcommand{\figurename}{Fig.}	
	\includegraphics[width=0.4\textwidth]{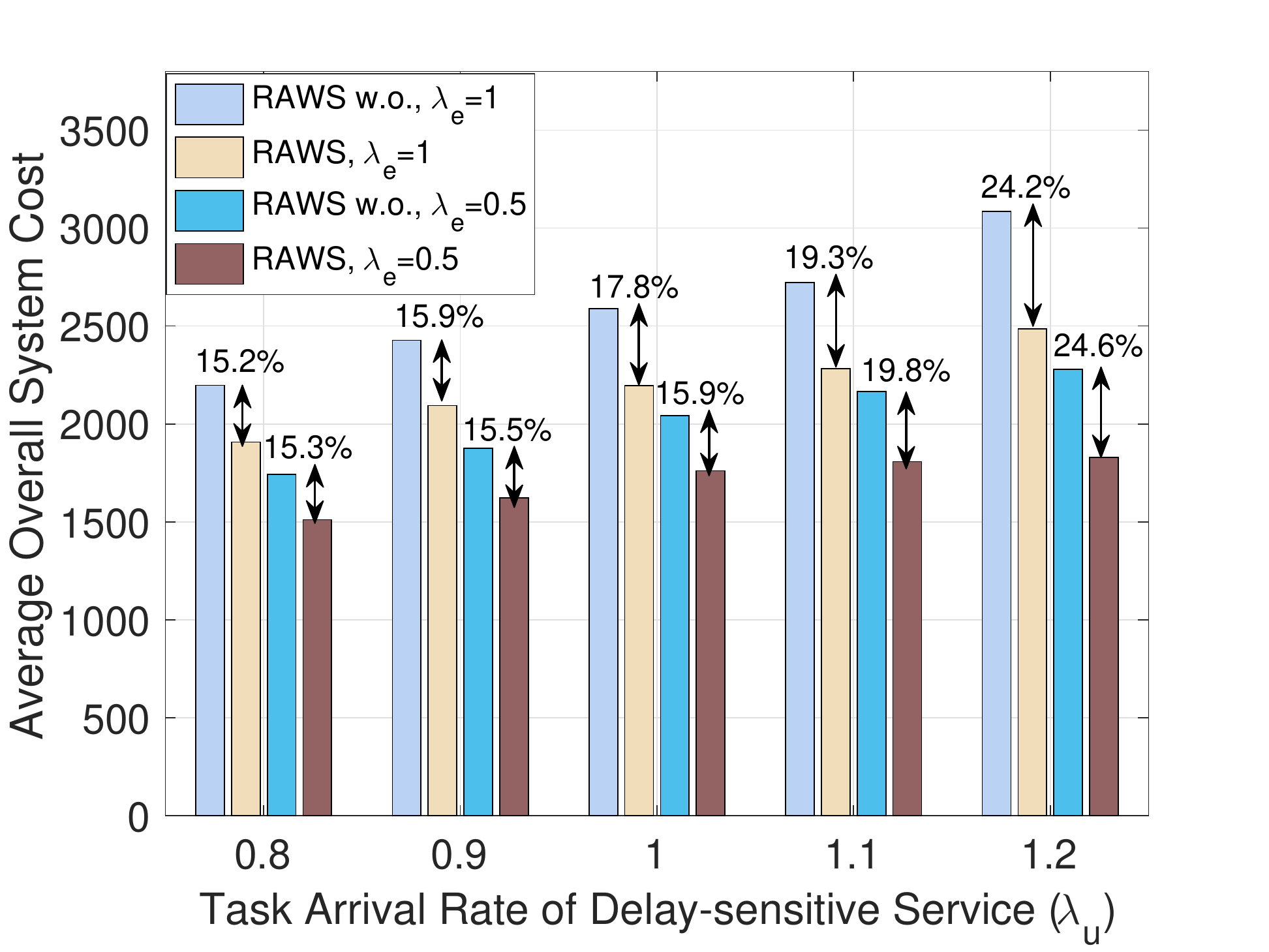}
%		\vspace{-0.5cm}
	\caption{Impact of workload distribution on the overall system cost with respect to task arrival rate.}
	\label{Fig:workload_comparsion}
		\vspace{-0.3cm}
\end{figure}
\subsubsection{Impact of workload distribution}
As shown in Fig.~\ref{Fig:workload_comparsion}, the impact of workload distribution in terms of the system cost is evaluated. The RAWS can effectively reduce the overall system cost as compared with that without workload distribution. In addition, the performance gain increases with the increase of the task arrival rate. For example, when $\lambda_e=1$, the performance gain increases from 15.2\% for $\lambda_u=0.8$ to 24.2\% for  $\lambda_u=1.2$.  This is because the proposed algorithm with workload distribution can better utilize network resources to reduce system cost than that without workload distribution, especially in a heavy traffic scenario.
%\begin{figure}
%	\centering
%	\renewcommand{\figurename}{Fig.}	
%	\includegraphics[width=0.5\textwidth]{figs/opeartion_cost.eps}
%	\caption{The cumulative distribution function of the service delay.}
%	\label{Fig:opeartion_cost}
%	%	\vspace{-0.5cm}
%\end{figure}

\subsection{Performance Evaluation Over  Urban Vehicle Traffic Flow Trace}
\begin{figure*}[t]
	%	\vspace{-0.2cm}
	\centering
	\renewcommand{\figurename}{Fig.}	
	\begin{subfigure}[Cumulative system cost]{
			\label{Fig:DIdi_cumulative_cost}
			\includegraphics[width=0.4\textwidth]{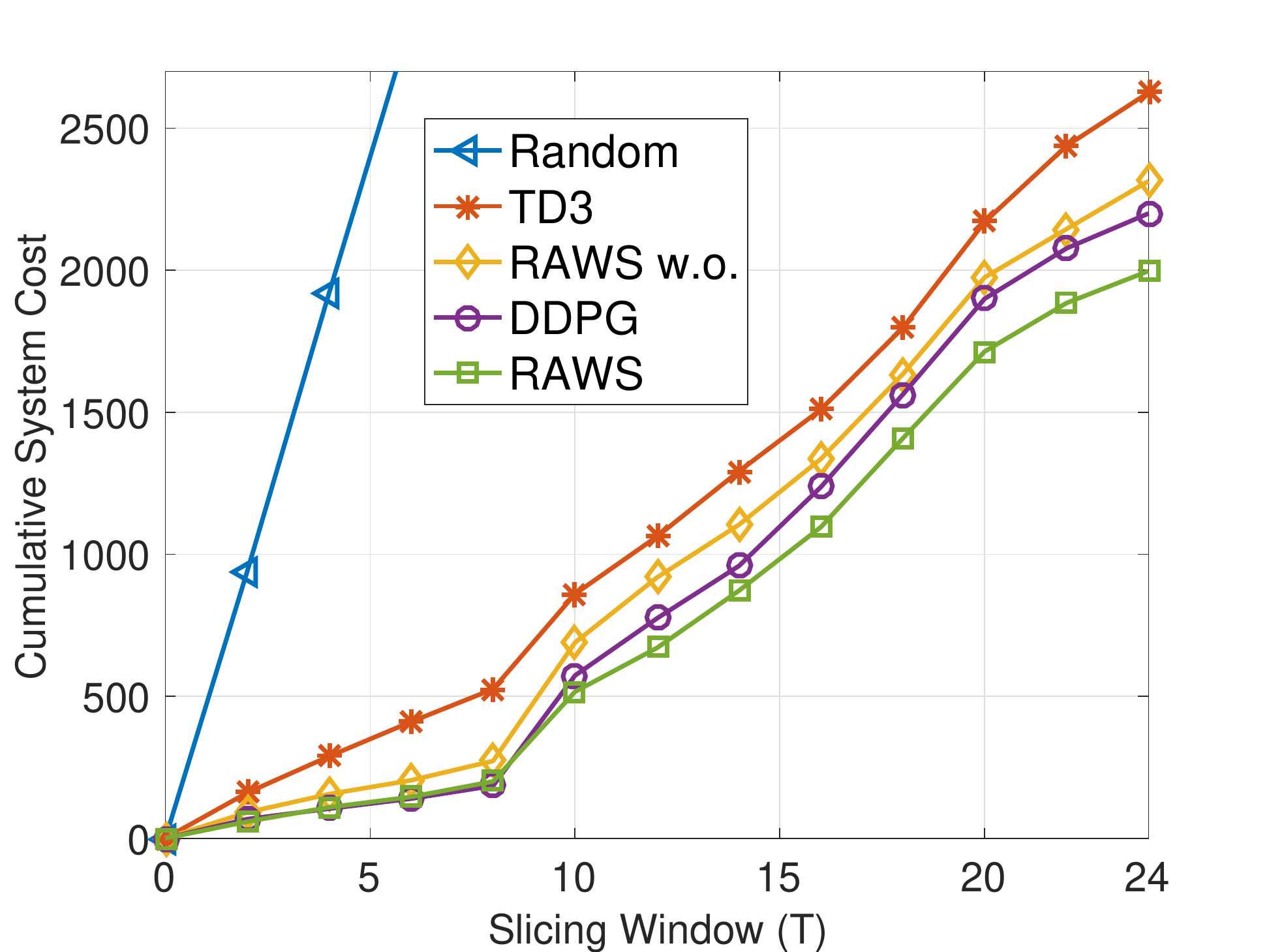}}
	\end{subfigure}
	~
	\begin{subfigure}[Service delay]{
			\label{Fig:DiDi_service_delay}
			\includegraphics[width=0.4\textwidth]{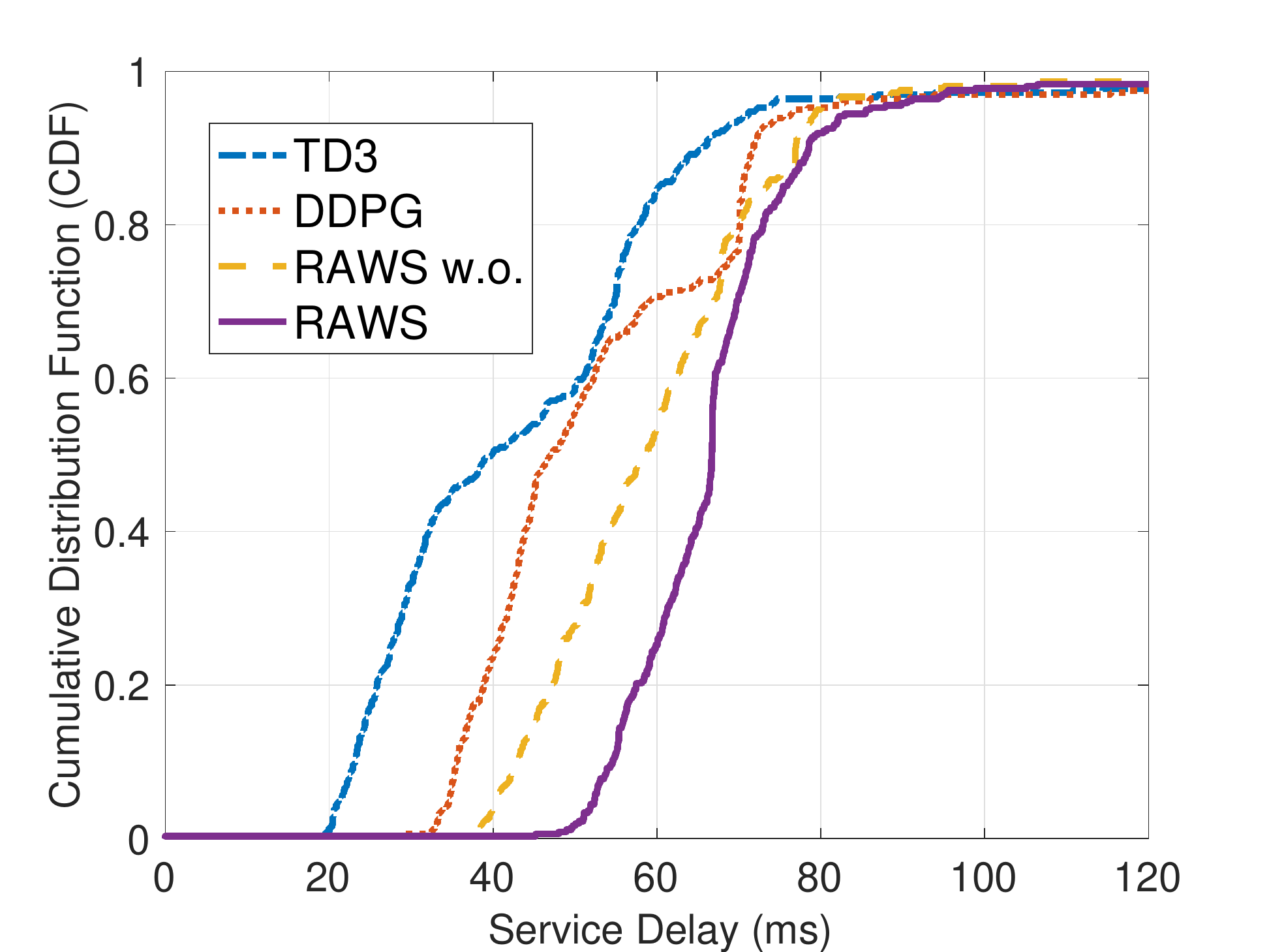}}
	\end{subfigure}
%		\vspace{-0.3cm}
	\caption{{Cumulative system cost and service delay comparison among the different algorithms for $\lambda_u=1$ over the urban vehicle traffic flow trace.}}
	\label{Fig:DiDi_trace}
		\vspace{-0.3 cm}
\end{figure*}
In addition to the highway vehicle traffic flow trace, we further evaluate the performance of the proposed algorithm over the urban vehicle traffic flow trace, in terms of the cumulative system cost and service delay. In Fig.~\ref{Fig:DIdi_cumulative_cost}, the cumulative system cost is presented with respect to the slicing window. Similar to the simulation results over the highway trace, the proposed RAWS algorithm achieves the minimum  cumulative system cost among all the algorithms. Specifically, the RAWS reduces the cumulative system  cost within one day by approximately 9.2\% as compared with the best benchmark algorithm. In Fig.~\ref{Fig:DiDi_service_delay}, the cumulative distribution functions of the service delay of different algorithms are compared, which shows that the RAWS algorithm can guarantee the service delay constraint (100\;\emph{ms}) with a very high probability. %In summary, the proposed algorithm can achieve good performance over the urban vehicle traffic flow trace.
%------------------------------------------------------------------------------------------------------------------------------------------------------------------------------------------------------------------------------------------------------------------------------------------------------------------------------------------
\section{Conclusion}\label{sec:conclusions}
In this paper, we have presented a dynamic RAN slicing framework for vehicular networks to support diverse IoV services with different QoS requirements. We have proposed the RAWS, a two-layer constrained RL algorithm, to make the workload distribution and resource allocation decisions. Trace-driven simulation results have been provided to demonstrate that the proposed algorithm can effectively reduce the system cost, especially in the heavy traffic scenario, while satisfying QoS requirements with a high probability. The RAWS differs from traditional optimization-based methods, with the ability to adapt to time-varying vehicle traffic density without future information. In addition to RAN slicing, the design principle of the RAWS that integrates optimization and RL  can  be applied to other constrained stochastic optimization problems. %For the future work, we will investigate another interesting related problem, namely the optimal slicing window size for RAN slicing.
{For the future work, we will investigate the optimal slicing window size for RAN slicing. In addition,  for the implementation in large-scale vehicular networks, we will develop a distributed and low-complexity learning based algorithm.}

\section*{Acknowledgements}
This work was financially supported by Natural Sciences and Engineering Research Council (NSERC) and Huawei Canada. The authors would like to thank the undergraduate research assistant, Anita Hu, for conducting the simulations. The authors would also like to thank Jie~Gao (Marquette University), Qihao Li, Jaya Rao, and Weisen Shi for many valuable suggestions throughout the work.

\appendix
\subsection{Proof of Theorem \ref{theorem}}\label{appendix:theorem}
	For notation simplicity, we omit $t$ in the proof. With  \eqref{equ:offloading_delay}, \eqref{equ:processing_delay} and \eqref{equ:delay_definition}, the objective function in problem $\mathbf{P}_{1,u}$ can be rewritten as 
	\begin{equation}\label{equ: revised_objective}
	\begin{split}
	f(\beta_{m,u})&=D^h_{u}+ \frac{1}{\sum_{m\in \mathcal{M}}\rho_mL\lambda_u}\cdot \\
	&\sum_{n\in \mathcal{N}}\left(\frac{\omega_{n,s,u}}{\omega_{n,s,u}-\chi_{n,u}-\sum_{m\in \mathcal{M}_o}\psi_{m,n,u}  \beta_{m,u}} \right.\\
	&+\left. \frac{\omega_{n,c,u}}{\omega_{n,c,u}-\chi_{n,u}-\sum_{m\in \mathcal{M}_o}\psi_{m,n,u}  \beta_{m,u}}-2   \right),
	\end{split}
	\end{equation}
	where $\omega_{n,s,u}=S_{n,u}{R}_n/\xi_u$ and $\omega_{n,c,u}=C_{n,u}F/\eta_u$. The second-order derivative of the objective function is given by
	\begin{equation}
	\begin{split}
	\frac{\partial^2 f(\beta_{m,u})}{\partial^2 \beta_{m,u}}
	&=\frac{1}{\sum_{m\in \mathcal{M}}\rho_mL\lambda_u} \cdot \\
	&\sum_{n\in \mathcal{N}}\left(\frac{2\omega_{n,s,u}\psi_{m,n,u}^2\beta_{m,u} }{\left(\omega_{n,s,u}-\chi_{n,u}-\sum_{m\in \mathcal{M}_o}\psi_{m,n,u}  \beta_{m,u}\right)^3}\right.\\
	&+\left. \frac{2\omega_{n,c,u}\psi_{m,n,u}^2\beta_{m,u} }{\left(\omega_{n,c,u}-\chi_{n,u}-\sum_{m\in \mathcal{M}_o}\psi_{m,n,u}  \beta_{m,u}\right)^3}\right).
	\end{split}
	\end{equation}
	Since  $\beta_{m,u} \geq 0$, we have ${\partial^2 f(\beta_{m,u})}/{\partial^2 \beta_{m,u}} \geq 0$ when the stability constraints in \eqref{equ: constraint_U_1} and \eqref{equ: constraint_U_2} are satisfied. As the constraints of the problem are linear, $\mathbf{P}_{1,u}$ is a convex optimization problem~\cite{boyd2004convex}. Hence, Theorem~\ref{theorem} is proved.

\bibliographystyle{IEEEtran}
\bibliography{security}

\vspace*{-1.5\baselineskip}
\begin{IEEEbiography}[{\includegraphics[width=1in,height=1.25in,clip,keepaspectratio]{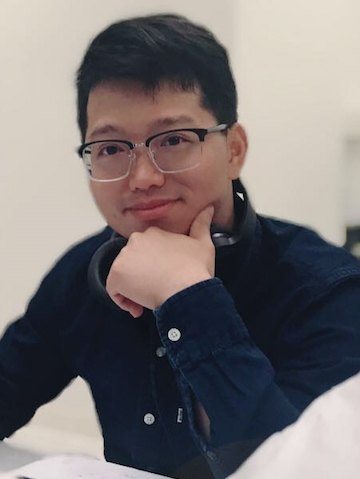}}]{Wen Wu}
	(S'13-M'20) earned the Ph.D. degree in Electrical and Computer Engineering from University of Waterloo, Waterloo, ON, Canada, in 2019. He received the B.E. degree in Information Engineering from South China University of Technology, Guangzhou, China, and the M.E. degree in Electrical Engineering from University of Science and Technology of China, Hefei, China, in 2012 and 2015, respectively. Starting from 2019, he works as a Post-doctoral fellow with the Department of Electrical and Computer Engineering, University of Waterloo. His research interests include millimeter-wave networks and AI-empowered wireless networks.
\end{IEEEbiography}

\vspace*{-1.5\baselineskip}
\begin{IEEEbiography}[{\includegraphics[width=1in,height=1.25in,clip,keepaspectratio]{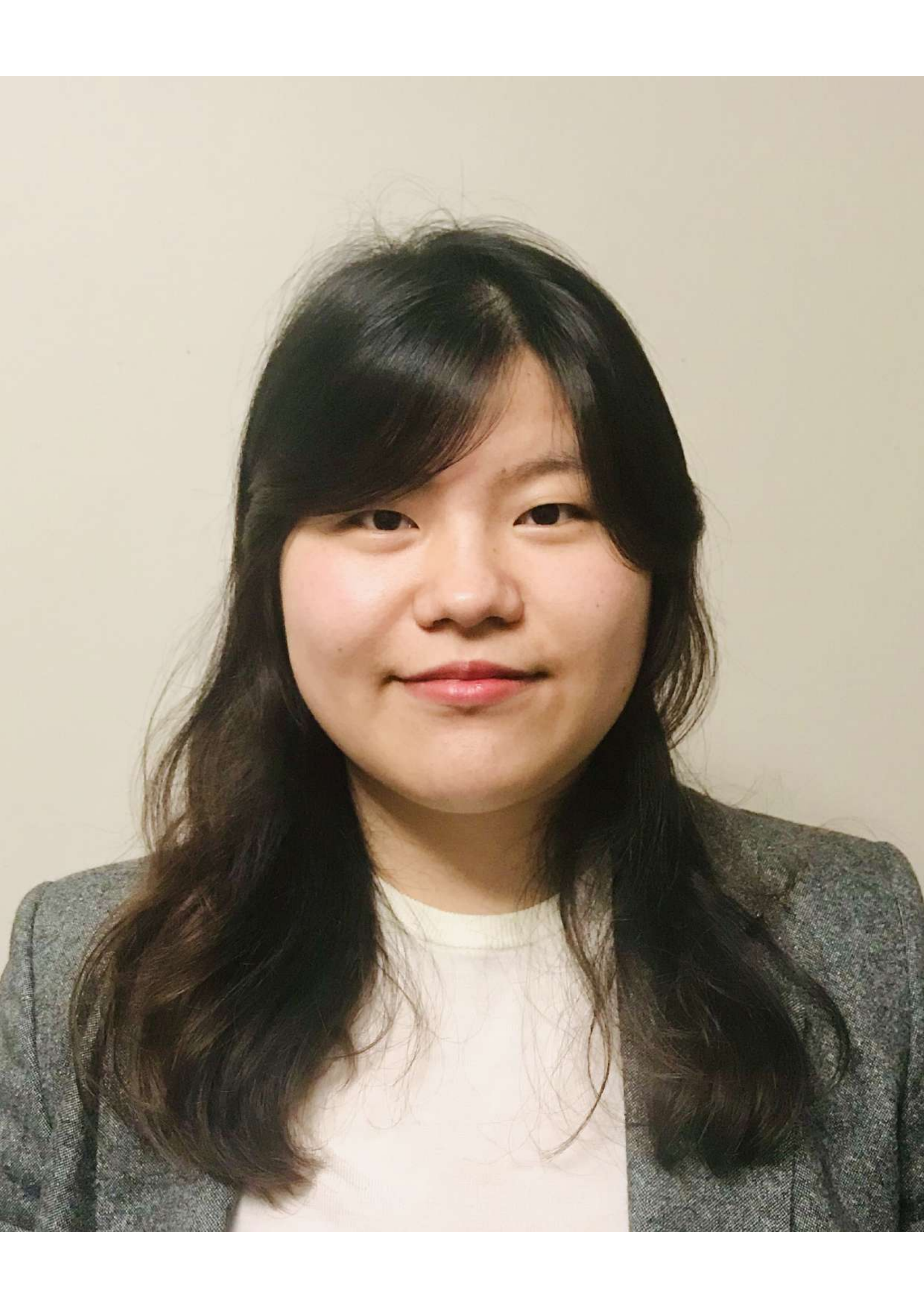}}]{Nan Chen}
	(M'20) received her Bachelor degree in Electrical Engineering and Automation from Nanjing University of Aeronautics and Astronautics, Nanjing, Jiangsu, China, in 2014, and the Ph.D. degree in Electrical and Computer Engineering from the University of Waterloo, Waterloo, ON, Canada, in 2019. Her current research interests include electric vehicle charging/discharging scheme design in smart grid, next-generation wireless networks, and	machine learning application in vehicular cyber-physical systems.
\end{IEEEbiography}

\vspace*{-1.5\baselineskip}
\begin{IEEEbiography}[{\includegraphics[width=1in,height=1.25in,clip,keepaspectratio]{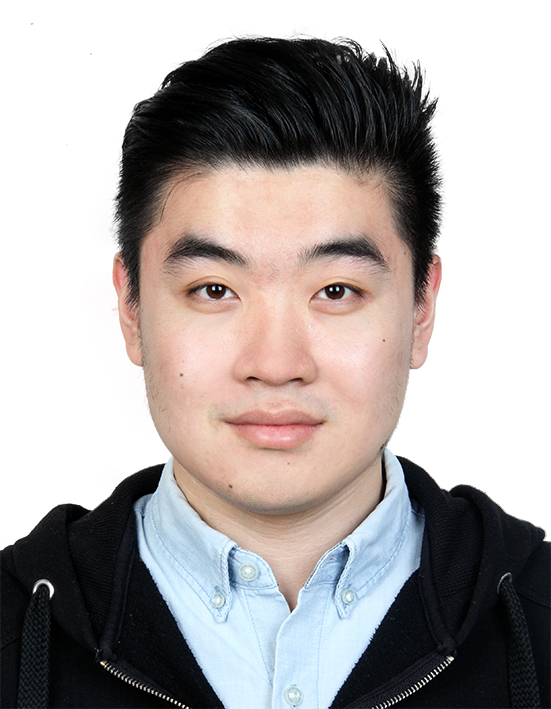}}]{Conghao Zhou}
	(S'19) received the B.S. degree from Northeastern University, Shenyang, China, in 2017 and received the M.S. degree from University of Illinois at Chicago, Chicago, IL, USA, in 2018. He is currently working toward the Ph.D. degree with the Department of Electrical and Computer Engineering, University of Waterloo, Waterloo, ON, Canada. His research interests include space-air-ground integration networks and machine learning in wireless networks.
\end{IEEEbiography}

\vspace*{-1.5\baselineskip}
\begin{IEEEbiography}[{\includegraphics[width=1in,height=1.25in,clip,keepaspectratio]{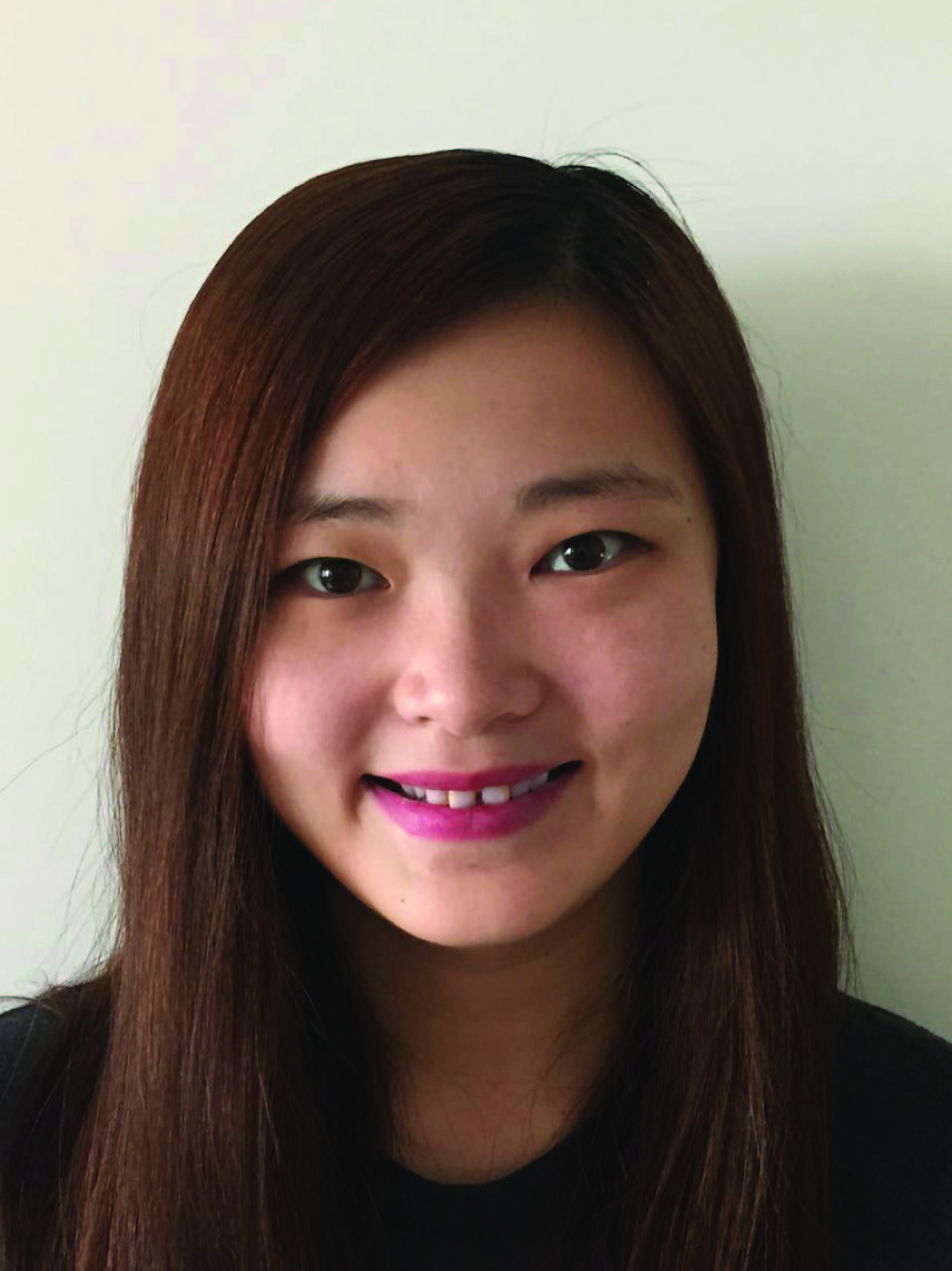}}]{Mushu Li}(S'18) received the B.Eng. degree from the University of Ontario Institute of Technology (UOIT), Canada, in 2015, and the M.A.Sc. degree from Ryerson University, Canada, in 2017. She is currently working toward the Ph.D. degree in electrical engineering at University of Waterloo, Canada. She was the recipient of Natural Science and Engineering Research Council of Canada Graduate Scholarship (NSERC-CGS) in 2018, and Ontario Graduate Scholarship (OGS) in 2015 and 2016, respectively. Her research interests include the system optimization in VANETs and machine learning in wireless networks.
\end{IEEEbiography}

\vspace*{-1.5\baselineskip}
\begin{IEEEbiography}[{\includegraphics[width=1in,height=1.25in,clip,keepaspectratio]{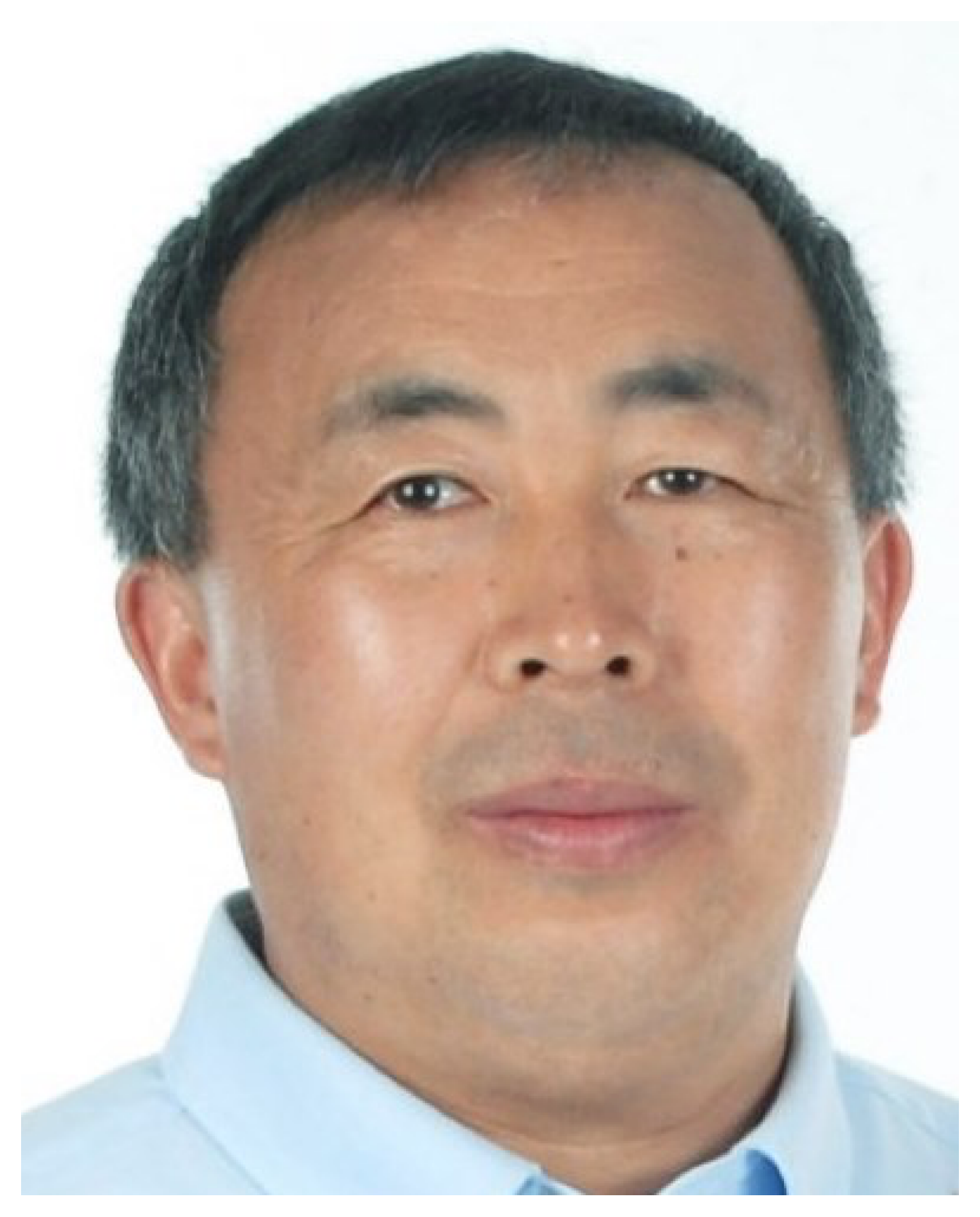}}]{Xuemin (Sherman) Shen}(M'97-SM'02-F'09) received the Ph.D. degree in electrical engineering from Rutgers University, New Brunswick, NJ, USA, in 1990. He is currently a University Professor with the Department of Electrical and Computer Engineering, University of Waterloo, Canada. His research focuses on network resource management, wireless network security, Internet of Things, 5G and beyond, and vehicular ad hoc and sensor networks. Dr. Shen is a registered Professional Engineer of Ontario, Canada, an Engineering Institute of Canada Fellow, a Canadian Academy of Engineering Fellow, a Royal Society of Canada Fellow, a Chinese Academy of Engineering Foreign Member, and a Distinguished Lecturer of the IEEE Vehicular Technology Society and Communications Society. 
	
Dr. Shen received the R.A. Fessenden Award in 2019 from IEEE, Canada, Award of Merit from the Federation of Chinese Canadian Professionals (Ontario) in 2019, James Evans Avant Garde Award in 2018 from the IEEE Vehicular Technology Society, Joseph LoCicero Award in 2015 and Education Award in 2017 from the IEEE Communications Society, and Technical Recognition Award from Wireless Communications Technical Committee (2019) and AHSN Technical Committee (2013). He has also received the Excellent Graduate Supervision Award in 2006 from the University of Waterloo and the Premier’s Research Excellence Award (PREA) in 2003 from the Province of Ontario, Canada. He served as the Technical Program Committee Chair/Co-Chair for IEEE Globecom’16, IEEE Infocom’14, IEEE VTC’10 Fall, IEEE Globecom’07, and the Chair for the IEEE Communications Society Technical Committee on Wireless Communications. Dr. Shen is the elected IEEE Communications Society Vice President for Technical \& Educational Activities, Vice President for Publications, Member-at-Large on the Board of Governors, Chair of the Distinguished Lecturer Selection Committee, Member of IEEE ComSoc Fellow Selection Committee. He was/is the Editor-in-Chief of the IEEE IoT JOURNAL, IEEE Network, IET Communications, and Peer-to-Peer Networking and Applications. 
	
\end{IEEEbiography}

\begin{IEEEbiography}[{\includegraphics[width=1in,height=1.25in,clip,keepaspectratio]{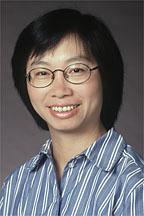}}]{Weihua Zhuang}
	(M'93-SM'01-F'08) has been with the Department of Electrical and Computer Engineering, University of Waterloo, Waterloo, ON, Canada, since 1993, where she is a Professor and a Tier I Canada Research Chair in Wireless Communication Networks.
	
	Dr. Zhuang was a recipient of the 2017 Technical Recognition Award from the IEEE Communications Society Ad Hoc and Sensor Networks Technical Committee, and a co-recipient of several Best Paper Awards from IEEE conferences. She was the Editor-in-Chief of the IEEE TRANSACTIONS ON VEHICULAR TECHNOLOGY from 2007 to 2013, the Technical Program Chair/Co-Chair of IEEE VTC Fall 2017 and Fall 2016, and the Technical Program Symposia Chair of IEEE Globecom 2011. She is an elected member of the Board of Governors and Vice President - Publications of the IEEE Vehicular Technology Society. She was an IEEE Communications Society Distinguished Lecturer from 2008 to 2011. Dr. Zhuang is a Fellow of the Royal Society of Canada, the Canadian Academy of Engineering, and the Engineering Institute of Canada.
	
\end{IEEEbiography}

\begin{IEEEbiography}[{\includegraphics[width=1in,height=1.25in,clip,keepaspectratio]{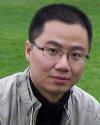}}]{Xu Li} is a senior principal researcher at Huawei Technologies Canada. He received a PhD (2008) degree from Carleton University, an MSc (2005) degree from the University of Ottawa, and a BSc (1998) degree from Jilin University, China, all in computer science. Prior to joining Huawei, he worked as a research scientist (with tenure) at Inria, France. His current research interests are focused in 5G. He contributed extensively to the development of 3GPP 5G standards through 90+ standard proposals. He has published 100+ refereed scientific papers and is holding 30+ issued US patents. He is/was on the editorial boards of IEEE Communications Magazine, the IEEE Transactions on Parallel and Distributed Systems, the Wiley Transactions on Emerging Telecommunications Technologies and a number of other international archive journals.
\end{IEEEbiography}

\end{document}